\documentclass[final,12pt,nosubfloats,backref=page]{clear2023} %

\title[Backtracking Counterfactuals]{Backtracking Counterfactuals}
\usepackage{times}
\usepackage{enumitem}

\usepackage{bm}
\usepackage{notation}
\usepackage{subcaption}
\usepackage{tikz}
\usetikzlibrary{bayesnet}
\usepackage{cleveref}
\crefname{section}{\S}{\S\S}
\crefname{appendix}{Appendix}{Appendices}
\crefname{subsection}{\S}{\S\S}
\crefname{subsubsection}{\S}{\S\S}
\crefname{figure}{Figure}{Figures}
\crefname{table}{Table}{Tables}
\crefname{definition}{Definition}{Definitions}
\crefname{corollary}{Corollary}{Corollaries}
\crefname{proposition}{Proposition}{Propositions}
\crefname{theorem}{Theorem}{Theorems}
\crefname{remark}{Remark}{Remarks}
\crefname{principle}{Principle}{Principles}
\crefname{example}{Example}{Examples}
\crefname{lemma}{Lemma}{Lemmata}
\crefname{claim}{Claim}{Claims}
\numberwithin{equation}{section}
\usepackage{quoting}
\usepackage{lipsum}
\renewcommand*{\backref}[1]{}
\renewcommand*{\backrefalt}[4]{%
    \ifcase #1%
          \or [Cited on page~#2.]%
          \else [Cited on pages~#2.]%
    \fi%
    }
\clearauthor{%
 \Name{Julius {von K\"ugelgen}} \Email{jvk@tuebingen.mpg.de}\\
 \addr Max Planck Institute for Intelligent Systems T\"ubingen, Germany \\ University of Cambridge
 \AND
 \Name{Abdirisak Mohamed} \Email{amoham70@umd.edu}\\
 \addr University of Maryland\\ SAP Labs, LLC
 \AND
 \Name{Sander Beckers} \Email{srekcebrednas@gmail.com }\\
 \addr University of Amsterdam
}

\begin{document}
\everypar{\looseness=-1}
\maketitle
\begin{abstract}%
Counterfactual reasoning---envisioning hypothetical scenarios, or possible worlds, where some circumstances are different from what (f)actually occurred (counter-to-fact)---is ubiquitous in human cognition.
Conventionally, counterfactually-altered circumstances have been treated as ``small miracles''
that locally violate the laws of nature while sharing the same initial conditions.
In Pearl's
structural causal model (SCM) framework this is made mathematically rigorous via \textit{interventions} that modify the causal laws while the values of exogenous variables are shared.
In recent years, however, this purely interventionist account of counterfactuals has increasingly come under scrutiny from both philosophers and psychologists.
Instead, they suggest a {\em backtracking} account of counterfactuals, according to which \textit{the causal laws remain unchanged} in the counterfactual world; differences to the factual world are instead ``backtracked'' to altered initial conditions (exogenous variables).
In the present work, we explore and formalise this alternative mode of counterfactual reasoning within the SCM framework.
Despite ample evidence that humans backtrack, the present work constitutes, to the best of our knowledge, the first general account and algorithmisation of backtracking counterfactuals.
We discuss our backtracking semantics in the context of related literature and draw connections to recent developments in explainable artificial intelligence (XAI).
\end{abstract}
\begin{keywords}%
Causal reasoning, backtracking, counterfactual explanations, explainable AI, XAI
\end{keywords}
\section{Introduction}
\label{sec:intro}
In a deterministic world, everything that happens is uniquely determined by the laws of nature and the initial (or background) conditions.\footnote{Even if the world is fundamentally non-deterministic at the quantum level, the point still holds for deterministic {\em models} of the world.
Causal models in particular can be given a deterministic interpretation, see~\eqref{eq:endodistr}.} Counterfactuals invite us to imagine what the world would look like if some events which did occur, had in fact not occurred. As a result, in order to make sense of counterfactuals in a deterministic world one is immediately presented with the following dilemma: for events to have been different, either the laws of nature would have had to be violated, or the background conditions would have had to be different. 

Following in the footsteps of~\citet{lewis1973counterfactuals,lewis1979counterfactual}, the majority of philosophers have opted for the first option: counterfactuals are to be evaluated by imagining ``small miracles'' that ensure those events which are counter-to-fact to occur by locally violating the laws of nature, thereby disconnecting these events from their causes, and keeping the background conditions fixed. In other words, counterfactuals are {\em non-backtracking}. \Citet[][p.239]{pearl2009causality} objected to Lewis's miracles and the possible-world semantics that comes with it, instead replacing these with structural equations and the well-behaved notion of an intervention that those allow:
\begin{quoting}
    Lewis’s elusive ``miracles'' are replaced by principled minisurgeries, [...] which represent the minimal change (to a model) necessary for establishing the antecedent.
\end{quoting}%
Still, Pearl's \textit{interventional} counterfactuals are also a commitment to the first option of our dilemma.

Recently this status quo has come under pressure by several philosophers who argue in favour of the second option~\citep{dorr2016against,Loewer2007-LOECAT,loewer2020consequence,esfeld2021super}: counter-to-fact events are to be explained by imagining small changes to the background conditions that would result in the counterfactual events instead of the actual ones, while the laws of nature remain unchanged. In other words, counterfactuals are {\em backtracking}.\footnote{or {\em observational}, we use both terms interchangeably; likewise for non-backtracking and interventional} Others propose a combination of both options, suggesting that each is appropriate under different circumstances \citep{fisher2017causal,fisher2017counterlegal,woodward2021causation}.   

This dispute is not merely academic: empirical work by psychologists now confirms that, depending on the context, people indeed switch between interpreting counterfactuals according to the first and the second option~\citep{rips2010two,gerstenberg2013back,lucas2015improved}. Historical explanations often rely on counterfactuals as well, and it has been argued that ``counterfactuals in history are backtracking'' \cite[p. 719]{reiss2009counterfactuals}.
Moreover, counterfactual explanations are taking up a prominent role in the work on explainable AI;  their interpretation thus has significant repercussions for real-life applications~\citep{wachter2017counterfactual}.

\begin{figure}[t]
    \centering
    \newcommand{\xshift}{5em}
    \newcommand{\yshift}{5em}
    \begin{subfigure}[b]{0.5\textwidth}
        \centering
        \begin{tikzpicture}
            \centering
            \node(u)[]{$\ub$};
            \node(V)[yshift=-\yshift,xshift=-\xshift]{$\Vb$};
            \node(V*)[yshift=-\yshift,xshift=\xshift]{$\Vb^*$};
            \edge {u}{V};
            \node[const,yshift=-0.35*\yshift,xshift=-0.5*\xshift]{$\Fb$};
            \edge{u}{V*};
            \node[const,yshift=-0.35*\yshift,xshift=0.6*\xshift]{$\Fb^*$};
            \node(wI)[const,yshift=-1.5*\yshift,xshift=-\xshift]{$w_I=(\Mcal,\ub)$};
            \node(wI*)[const,yshift=-1.5*\yshift,xshift=\xshift]{$w_I^*=(\Mcal^*,\ub)$};
        \end{tikzpicture}
        \caption{Interventional counterfactual}
        \label{subfig:IC}
    \end{subfigure}%
    \begin{subfigure}[b]{0.5\textwidth}
        \centering
        \begin{tikzpicture}
            \centering
            \node(u)[]{$\ub$};
            \node(V)[yshift=-\yshift]{$\Vb$};
            \node(u*)[xshift=2*\xshift]{$\ub^*$};
            \node(V*)[yshift=-\yshift,xshift=2*\xshift]{$\Vb^*$};
            \edge {u}{V};
            \node[const,yshift=-0.35*\yshift,xshift=-0.1*\xshift]{$\Fb$};
            \edge{u*}{V*};
            \path[<->,dashed] (u) edge (u*);
            \node[const,yshift=-0.35*\yshift,xshift=1.9*\xshift]{$\Fb$};
            \node(wO)[const,yshift=-1.5*\yshift]{$w_B=(\Mcal,\ub)$};
            \node(wO*)[const,yshift=-1.5*\yshift,xshift=2*\xshift]{$w_B^*=(\Mcal,\ub^*)$};
        \end{tikzpicture}
        \caption{Backtracking counterfactual}
        \label{subfig:BC}
    \end{subfigure}
    \caption{\looseness-1 %
    \textbf{Illustration of the main conceptual difference between interventional and backtracking counterfactuals.} 
    (a)~For \textit{interventional} counterfactuals, the factual world~$w_I$ and counterfactual world~$w_I^*$ share the exact same background conditions~$\ub$. Potential contradictions between the factual outcome~$\Vb$ and the counterfactual outcome~$\Vb^*$ are resolved through changes to the causal laws~$\Fb$ (by means of intervention), giving rise to the modified laws~$\Fb^*$ and submodel~$\Mcal^*$.
    (b)~For \textit{backtracking} counterfactuals, on the other hand, the factual world~$w_B$ and counterfactual world~$w_B^*$ share the same unmodified causal laws~$\Fb$. To allow for the factual and counterfactual outcomes~$\Vb$ and~$\Vb^*$ to differ, the respective background conditions $\ub$ and $\ub^*$ may differ, too, with a preference for minimal changes, that is, for the two worlds to be as close as possible.
    }
    \label{fig:overview}
\end{figure}
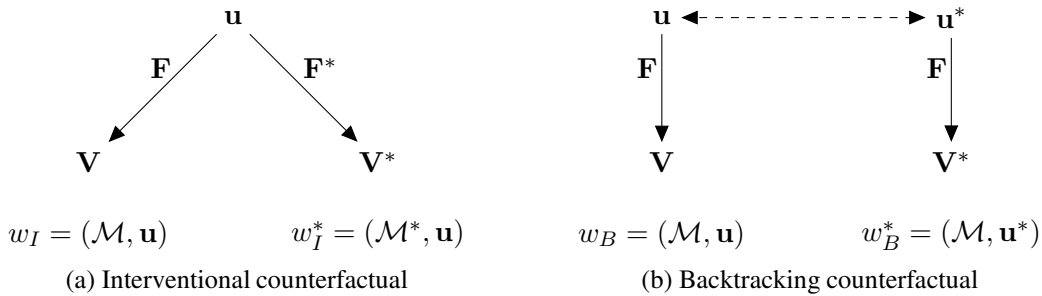

While non-backtracking, interventional counterfactuals have been given a well-defined semantics within \citeauthor{pearl2009causality}'s framework of structural causal models~(SCMs), the same cannot be said for backtracking counterfactuals. 
In this paper we offer the first general formal semantics for backtracking counterfactuals within SCMs. 
Our general semantics can be made more specific in various ways, depending on the particular purpose that is at stake. We do not view these backtracking semantics as an alternative to the standard non-backtracking semantics: the value of the latter for causal reasoning is undeniable. Rather, we see both semantics as being fit for different reasoning tasks. 
\begin{quoting}%
it is appropriate to use backtracking counterfactuals to answer 
[...] how the past would have had to have been different had the present been different. [...]
backtracking counterfactuals are important in diagnostic reasoning. However, this does not mean that it is misguided to use non-backtracking counterfactuals to answer other sorts of questions such as those having to do with whether Cs cause Es. The two kinds of counterfactuals are just different, with different truth conditions~\citep[p.~206]{woodward2021causation}
\end{quoting}

\paragraph{Structure and Contributions.}
We start by reviewing the SCM framework and its interventional semantics for counterfactual reasoning~(\cref{sec:background}).
In~\cref{sec:backtracking}, we then provide a comprehensive account of the alternative, backtracking mode of counterfactual reasoning by explaining the high-level intuition~(\cref{subsec:intuition}, see~\cref{fig:overview} for an overview), introducing a formal semantics~(\cref{subsec:formalisation}), discussing some of the design choices and desiderata~(\cref{subsec:choices}), and providing some first theoretical insights~(\cref{subsec:theory}).
We compare our semantics to previous attempts and discuss related work from other fields in~\cref{sec:related_work}. We then highlight the use of backtracking counterfactuals for explainable artificial intelligence (XAI)~(\cref{sec:xai}), and conclude with an outlook and suggestions for future work~(\cref{sec:future_work}).

\section{Preliminaries: Structural Causal Models and Interventional Counterfactuals}
\label{sec:background}
The following definitions of SCMs and their counterfactual semantics follow~\citet[][\S~7.1]{pearl2009causality}.

A \textbf{causal model} is a triple $\mathcal{M}=(\mathbf{U},\mathbf{V}, \mathbf{F})$
where:
    (i) 
    $\mathbf{U}$ is a set $\{U_1, ..., U_m\}$ of \textbf{exogenous} (background) variables 
    determined by factors outside the model;
    (ii) 
    $\mathbf{V}$ is a set $\{V_1, V_2,\dots, V_n\}$ of \textbf{endogenous} variables 
    determined by variables in the model, that is, variables in $\mathbf{U}\cup\mathbf{V}$;
    and 
    (iii)~$\mathbf{F}$ is a set of functions $\{f_1, f_2,\dots, f_n\}$ s.t.\ each $f_i$ is a mapping from (the
    respective domains of) $\mathbf{U}_i\cup \mathbf{PA}_i$ to~$V_i$, where $\mathbf{U}_i\subseteq \mathbf{U}$ and $\mathbf{PA}_i\subseteq \mathbf{V}\setminus \{V_i\}$, and the
    entire set $\mathbf{F}$ forms a mapping from $\mathbf{U}$ to $\mathbf{V}$. In other words, each $f_i$ in the \textbf{structural equations}, or \textbf{causal laws},
    \begin{equation}
        V_i:=f_i(\mathbf{PA}_i, \mathbf{U}_i) \qquad \qquad i=1,\dots, n,
    \end{equation}
    assigns a value to $V_i$ that depends on (the values of) a select set of variables in $\mathbf{U}\cup\mathbf{V}$,
    and the entire set $\mathbf{F}$ has a unique solution $\mathbf{V}(\mathbf{u})$. The latter is ensured, e.g., in acyclic (``recursive'') systems.

The \textbf{causal diagram} $G(\mathcal{M})$ associated with causal model $\mathcal{M}$ is the directed graph in which each node corresponds to a variable and directed edges point from members of $\mathbf{PA}_i$ and
$\mathbf{U}_i$ toward~$V_i$. 
Since the exogenous (background) variables $\Ub$ are typically unobserved, it is common to only consider the subset of $G(\mathcal{M})$ corresponding to its projection onto $\Vb$, where shared exogenous parents between some $V_i$ and $V_j$ are indicated with a bi-directed arrow $V_i\leftrightarrow V_j$

For a subset of endogenous variables $\mathbf{X}\subseteq\mathbf{V}$ and a realisation $\mathbf{x}$ thereof, the \textbf{submodel} $\mathcal{M}_\mathbf{x}$ of~$\mathcal{M}$ is the model $\mathcal{M}_\mathbf{x}=(\mathbf{U},\mathbf{V}, \mathbf{F}_\mathbf{x})$ where 
$\mathbf{F}_\mathbf{x}=\{f_i:V_i\not\in\mathbf{X}\}\cup \{\mathbf{X}:=\mathbf{x}\}$.
The \textbf{effect of action} $do(\mathbf{X} = \mathbf{x})$ on $\mathcal{M}$ is given by the submodel~$\mathcal{M}_\mathbf{x}$.
A \textbf{causal world} $w$ is a pair $(\Mcal,\ub)$ where $\Mcal$ is a causal model and $\ub$ is a particular realization
of the background variables $\Ub$.
The \textbf{potential response} of $\mathbf{Y}\subseteq \mathbf{V}$ to action $do(\mathbf{X} = \mathbf{x})$ in world $w=(\Mcal,\ub)$, denoted $\mathbf{Y}_{\mathbf{x}}(\mathbf{u})$, is the solution for $\mathbf{Y}$ of the set of equations $\mathbf{F}_\mathbf{x}$, that is, $\mathbf{Y}_\mathbf{x}(\mathbf{u})=\mathbf{Y}_{\mathcal{M}_\mathbf{x}}(\mathbf{u})$.
The \textbf{counterfactual} sentence ``$\mathbf{Y}$ would be $\mathbf{y}$ (in situation $\mathbf{u}$), had $\mathbf{X}$ been $\mathbf{x}$'' is then interpreted as the equality $\mathbf{Y}_{\mathbf{x}}(\mathbf{u})=\mathbf{y}$.
The part ``had $\mathbf{X}$ been $\mathbf{x}$'' is called the (counterfactual) \textbf{antecedent}.

A \textbf{probabilistic causal model} is a distribution over causal worlds, that is, a pair $(\mathcal{M},P(\mathbf{U}))$ where $\Mcal$ is a causal model and $P(\Ub)$ is a probability function defined over the domain of $\mathbf{U}$.
The function $P(\Ub)$, together with the fact that each endogenous variable is a function of $\Ub$, defines a \textbf{distribution over endogenous variables}: for any $\Yb\subseteq \Vb$ we have\footnote{To focus on the main points, ease the notation, and avoid measure theoretic details, we present all formulae for \textit{discrete} random variables in terms of \textit{sums} of probability \textit{mass} functions; the analogue for \textit{continuous} random variables would involve \textit{integrals} of probability density functions (assuming existence of densities w.r.t.\ the Lebesgue measure).}
\begin{equation}\label{eq:endodistr}
    P(\yb):=P(\Yb=\yb)=\sum_{\ub} P(\ub) \, \1_{\{\Yb(\ub)=\yb\}}
\end{equation}
where $\1$ denotes the indicator function.
The \textbf{probability of counterfactuals}
is defined analogously, through the potential responses induced by different submodels:
for any not necessarily disjoint sets of variables $\Yb, \Xb, \Zb, \Wb\subseteq \Vb$, we have
\begin{equation}
\label{eq:joint_prob_ICs}
    P(\Yb_\xb = \yb, \Zb_\wb = \zb)=\sum_{\ub} P(\ub)\1_{\{\Yb_\xb(\ub)=\yb\}}\1_{\{\Zb_\wb(\ub) = \zb\}}.
\end{equation}

In particular, $P(\Yb_\xb=\yb, \Xb=\xb')$ and $P(\Yb_\xb=\yb, \Yb_{\xb'}=\yb')$ are well defined using SCM semantics, even though $\xb\neq\xb'$ may be incompatible and thus cannot be measured simultaneously.\footnote{Some therefore consider statements about such expressions as fundamentally unscientific, see, e.g.,~\citet{dawid2000causal}.}

To notationally distinguish the factual variables~$\Vb$ from counterfactual versions, or copies, thereof, the latter are sometimes also denoted with an asterisk~$\Vb^*$~\citep[][]{BalkeP94}.

Of special interest are counterfactuals of the form $P(\Yb^*_{\xb^*}=\yb^*~|~\zb)$ that are conditional on a (f)actual observation $\zb$. 
These can be computed via the following three-step procedure:
\begin{enumerate}[left=0pt .. \parindent]
    \item \textbf{Abduction:} Update $P(\Ub)$ by the evidence $\zb$ to obtain $P(\Ub~|~\zb)$.
    \item \textbf{Action:} Modify $\Mcal$ by the action $do(\xb^*)$
    to obtain the submodel $\Mcal_{\xb^*}$.
    \item \textbf{Prediction:} Use the modified model $(\Mcal_{\xb^*}, P(\Ub~|~\zb))$ to compute the probability of $\Yb^*$.
\end{enumerate}

\begin{example}[Interventional counterfactual]
\label{ex:IC}
Consider a probabilistic causal model $(\Mcal,P(\Ub))$ with 
$\Vb=\{X,Y,Z\}$, $\Ub=\{U_X,U_Y, U_Z\}$, causal laws $\Fb$ given by
\begin{equation}
\label{eq:example_linear_SCM}
    X:=U_X, \qquad\qquad 
    Y:=X+U_Y, \qquad\qquad
    Z:=X+Y+U_Z,
\end{equation}
and $P(\Ub)$ being the multivariate standard isotropic Gaussian distribution $\Ncal(\zerob,\Ib_3)$.

Suppose that we make the factual observation $(X=1, Y=2, Z=2)$ and wish to reason about the (interventional) counterfactual ``what would have been, had $Y$ been $3$'', that is, we are interested in $P((X^*,Z^*)_{Y^*=3}~|~X=1,Y=2,Z=2)$, see~\cref{subfig:ex_IC} for a visualisation.

To this end, we follow the aforementioned three-step procedure:
\begin{enumerate}[left=0pt .. \parindent]
    \item \textbf{Abduction:}
    we find the posterior~$P(\Ub~|~X=1, Y=2, Z=2)$ to be a point mass on $(U_X=1,U_Y=1, U_Z=-1)$.
    \item \textbf{Action:} 
    we modify $\Mcal$ to obtain the submodel $\Mcal_{Y^*=3}$ as
\begin{equation}
\label{eq:ex_IC}
    X^*:=U_X, \qquad\qquad
    Y^*:=3, \qquad\qquad
    Z^*:=X^*+Y^*+U_Z;
\end{equation}
Note that this step alters the mechanism for the counterfactual antecedent $Y^*$ and removes its dependence on $X^*$ and $U_Y$, as shown in~\cref{subfig:ex_IC}.
\item \textbf{Prediction:}
we compute the push-forward of $P(\Ub~|~X=1, Y=2, Z=2)$ via~\eqref{eq:ex_IC} which yields a point mass on $(X^*=1, Z^*=3)$. That is, $X$ (being a parent of $Y$) would have remained unaffected, but $Z$ (being a child of $Y$) would have increased by~one.
%
\end{enumerate}
\end{example}
\iftrue
\begin{remark}[Ladder of Causation]
The distribution $P((X^*,Z^*)_{Y^*=3}~|~X=1,Y=2,Z=2)$ of interest in~\cref{ex:IC} differs from both simple conditioning, that is, the observational distribution 
\begin{equation}
    P(X,Z~|~Y=3)
    =\Ncal\left(
    \begin{pmatrix}
    1.5 \\ 4.5 
    \end{pmatrix},
    \begin{pmatrix}
    0.5 & 0.5 \\
    0.5 & 0.5 
    \end{pmatrix}
    \right),
\end{equation}
as well as from the interventional distribution 
\begin{equation}
    P((X,Z)_{Y=3})
    =\Ncal\left(
    \begin{pmatrix}
    0 \\ 3 
    \end{pmatrix}
    ,
    \begin{pmatrix}
    1 & 1 \\
    1 & 2 
    \end{pmatrix}
    \right),
\end{equation}
both of which preserve uncertainty about $(X,Z)$.
(In contrast, the interventional counterfactual is fully determined in~\cref{ex:IC}, because the shared background conditions $\ub$ could be uniquely inferred from the factual observation.)
These three modes of reasoning---observational, interventional, and counterfactual---constitute increasingly difficult tasks, each requiring additional data or assumptions over the previous, and form the three rungs of the so-called ``Ladder of Causation''~\citep{pearl2018book} or ``Pearl Causal Hierarchy''~\citep{bareinboim2022pearl}.
\end{remark}
\fi
\begin{remark}[Forward-tracking]
\label{remark:forward_tracking}
In acyclic causal models, any manipulation $\Fb_\xb$ of the causal laws~$\Fb$ such as $Y^*:=3$ in~\eqref{eq:ex_IC}
only has ``downstream'' effects: the change only propagates to descendants of the intervened-upon variables, while any non-descendants remain unaffected.
In this sense, interventions and thus also interventional counterfactuals are purely {\em forward-tracking}.
\end{remark}

The \textit{interventional} approach to computing counterfactuals in SCMs thus assumes that the background conditions $\ub$ are shared between the factual and counterfactual worlds and instead relies on modifying the causal laws (the \textit{action} step) to explain possible discrepancies (such as the difference $Y=2\neq Y^*=3$
in~\cref{ex:IC}),
as illustrated in~\cref{subfig:IC,subfig:ex_IC}.
As~\citet[][p.205]{pearl2009causality} puts it:~%
\begin{quoting}[]
[It]
interprets the counterfactual phrase ``had $\mathbf{X}$ been $\mathbf{x}$'' in terms of a {\em hypothetical modification of the equations in the model}; it simulates an {\em external action} (or spontaneous change) that modifies the actual course of history and enforces the condition ``$\mathbf{X}=\mathbf{x}$'' with minimal {\em change of mechanisms}. 
This 
[...]
permits $\mathbf{x}$ to differ from the current value of $\mathbf{X}(\mathbf{u})$ without creating logical contradiction; 
it also suppresses abductive inferences (or {\em backtracking}) from the counterfactual antecedent $\mathbf{X}=\mathbf{x}$~[{\em emphasis ours}]
\end{quoting}

\section{Backtracking Counterfactuals}
\label{sec:backtracking}
We now explore and formalise an alternative, non-interventional mode of counterfactual reasoning that does not involve such ``change of mechanism'' by ``external action'', and instead relies on  so-called \textit{backtracking} all changes to changes in the values of exogenous variables.

\begin{figure}[t]
    \centering
    \newcommand{\xshift}{5em}
    \newcommand{\yshift}{3.5em}
    \begin{subfigure}[b]{0.5\textwidth}
        \centering
        \begin{tikzpicture}
            \centering
            \node(X)[obs]{$X$};
            \node(Y)[obs,yshift=-\yshift]{$Y$};
            \node(Z)[obs,yshift=-2*\yshift]{$Z$};
            \node(X*)[latent,xshift=2*\xshift]{$X^*$};
            \node(Y*)[det,fill=gray!25,yshift=-\yshift,xshift=2*\xshift]{$Y^*$};
            \node(Z*)[latent,yshift=-2*\yshift,xshift=2*\xshift]{$Z^*$};
            \node(U_X)[latent,yshift=0.5*\yshift,xshift=\xshift]{$U_X$};
            \node(U_Y)[latent,yshift=-0.5*\yshift,xshift=\xshift]{$U_Y$};
            \node(U_Z)[latent,yshift=-1.5*\yshift,xshift=\xshift]{$U_Z$};
            \edge{U_X}{X,X*};
            \edge{X,U_Y}{Y};
            \edge{Y,U_Z}{Z};
            \edge{U_Z,Y*}{Z*};
            \path[->] (X) edge[bend right=45] (Z);
            \path[->] (X*) edge[bend left=45] (Z*);
        \end{tikzpicture}
        \caption{Interventional counterfactual~(\cref{ex:IC})}
        \label{subfig:ex_IC}
    \end{subfigure}%
    \begin{subfigure}[b]{0.5\textwidth}
        \centering
        \begin{tikzpicture}
            \centering
            \node(X)[obs]{$X$};
            \node(Y)[obs,yshift=-\yshift]{$Y$};
            \node(Z)[obs,yshift=-2*\yshift]{$Z$};
            \node(U_X)[latent,yshift=0.5*\yshift,xshift=\xshift]{$U_X$};
            \node(U_Y)[latent,yshift=-0.5*\yshift,xshift=\xshift]{$U_Y$};
            \node(U_Z)[latent,yshift=-1.5*\yshift,xshift=\xshift]{$U_Z$};
            \node(U_X*)[latent,yshift=0.5*\yshift,xshift=2*\xshift]{$U_X^*$};
            \node(U_Y*)[latent,yshift=-0.5*\yshift,xshift=2*\xshift]{$U_Y^*$};
            \node(U_Z*)[latent,yshift=-1.5*\yshift,xshift=2*\xshift]{$U_Z^*$};
            \node(X*)[latent,xshift=3*\xshift]{$X^*$};
            \node(Y*)[obs,yshift=-\yshift,xshift=3*\xshift]{$Y^*$};
            \node(Z*)[latent,yshift=-2*\yshift,xshift=3*\xshift]{$Z^*$};
            \edge{U_X}{X};
            \edge{U_X*}{X*};
            \edge{X,U_Y}{Y};
            \edge{X*,U_Y*}{Y*};
            \edge{Y,U_Z}{Z};
            \edge{Y*,U_Z*}{Z*};
            \path[->] (X) edge[bend right=45] (Z);
            \path[->] (X*) edge[bend left=45] (Z*);
            \path[<->,dashed] (U_X) edge (U_X*);
            \path[<->,dashed] (U_Y) edge (U_Y*);
            \path[<->,dashed] (U_Z) edge (U_Z*);
        \end{tikzpicture}
        \caption{Backtracking counterfactual~(\cref{ex:BC})}
        \label{subfig:ex_BC}
    \end{subfigure}
    \caption{%
    \textbf{Graphical comparison between the interventional and backtracking approaches to computing the counterfactual $P(X^*,Z^*~|~Y^*,X,Y,Z)$.}
    Shaded nodes are ``observed'' (or hypothesised to be known, in the case of counterfactual/starred variables), while white nodes are unobserved/latent. 
    (a)~For \textit{interventional} counterfactuals, the counterfactual antecedent~$Y^*$ is assumed to arise from a small miracle in the form of a (local) change to the causal laws, illustrated by the diamond node. This intervention removes the dependence of $Y^*$ on its causal parents $X^*$ and $U_Y$, hence the missing arrows into $Y^*$. The exogenous variables are assumed to be shared, giving rise to a so-called ``twin network''~\citep{BalkeP94}. As a consequence, non-descendants of~$Y^*$ (here, only the parent $X^*$), remain the same and only descendants of~$Y^*$ (here, only the child $Z^*$) are affected.
    (b)~For \textit{backtracking} counterfactuals, the causal relationship between the counterfactual antecedent~$Y^*$ and its parents $X^*$ and $U_X^*$ is preserved. This necessitates the introduction of new counterfactual exogenous variables $(U_X^*,U_Y^*,U_Z^*)$ which should be ``close'' to the original $(U_X,U_Y,U_Z)$ but may differ from it to absorb, or explain, the difference between $Y$ and $Y^*$. As a result of such backtracking, not only descendants ($Z^*$) but also ancestors ($X^*$) may change. 
    }
    \label{fig:examples}
\end{figure}
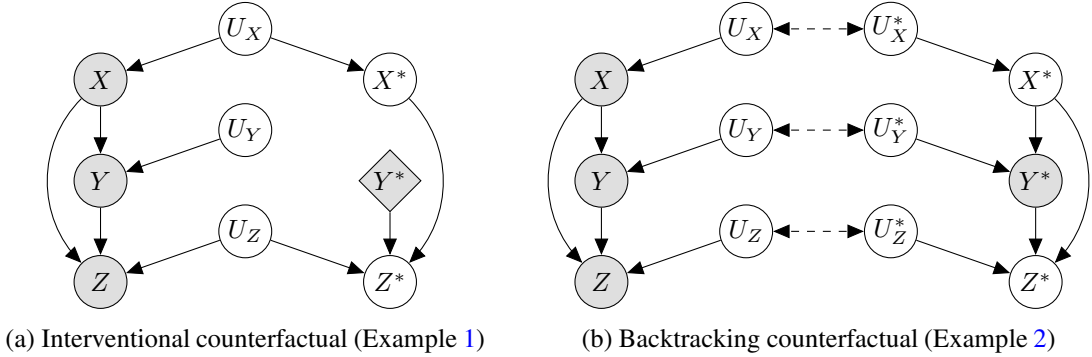

\subsection{Intuition and Main Idea}
\label{subsec:intuition}
The main idea behind backtracking counterfactuals---and its core conceptual difference to interventional counterfactuals---is that the causal laws, not the background conditions, are shared between the factual and counterfactual worlds. Consequently, the exogenous variables need to be allowed to differ to accommodate possibly contradictory facts and counterfacts. 
This is illustrated in~\cref{subfig:BC}. 

The name ``backtracking'' refers to the act of updating upstream variables to explain counterfacts without breaking the causal laws. Such backtracking is not needed for interventional counterfactuals for which the modified equations take care of this.
Let us first explore this idea through an example.%
\begin{example}[Backtracking counterfactual]
\label{ex:BC}
Consider the same model~\eqref{eq:example_linear_SCM} and factual observation $(X=1, Y=2, Z=2)$ from~\cref{ex:IC}. Suppose that now we are instead interested in the {\em backtracking} counterfactual ``what would have been, had $Y$ instead been {\em observed} to be $Y^*=3$''.

From abduction on the factual observation, we obtain the same posterior $P(\Ub~|~X=1, Y=2, Z=2)$ being a point mass on $(U_X=1,U_Y=1, U_Z=-1)$. 
However, in contrast to~\eqref{eq:ex_IC}, the structural equations in the counterfactual world now take the form:
\begin{equation}
\label{eq:ex_BC}
    X^*:=U_X^*, \qquad\qquad 
    Y^*:=X^*+U_Y^*=3, \qquad\qquad
    Z^*:=X^*+Y^*+U_Z^*;
\end{equation}
We highlight the following two key differences between~\eqref{eq:ex_BC} and its interventional counterpart~\eqref{eq:ex_IC}: (i) the causal laws remain fully in place, as apparent from the dependence of $Y^*$ on $X^*$ and $U_Y^*$ in~\eqref{eq:ex_BC}; (ii) the exogenous variables are not (necessarily) shared, as apparent from the introduction of new counterfactual background variables $(U_X^*, U_Y^*, U_Z^*)$ in~\eqref{eq:ex_BC}. 

Now, in order for $Y^*:=X^*+U_Y^*=3$ to hold, at least one of $X^*$ or $U_Y^*$ need to differ from their factual values of $X=U_Y=1$.
In other words, with the causal laws in the counterfactual world still being in place, the change from $Y=2$ to $Y^*=3$ needs to be explained by a change in $X$, $U_Y$, or both. 
In fact, there is an infinite number of such changed $(X^*,U_Y^*)$ given by the solutions to $X^*+U_Y^*=3$. 
Let us explore a few simple options: 
\begin{enumerate}[label=(\roman*),left=-12pt .. \parindent]
    \item We
    explain the change in $Y$ entirely by a change in $U_Y$: this can be achieved by keeping $X^*$ and hence also $U_X^*$ equal to their factual values of one and setting $U_Y^*=2$. Then $U_Z^*$ could remain equal to its factual value of $U_Z=-1$, leading to $Z^*:=X^*+Y^*+U_Z^*=1+3-1=3$ which is identical to the interventional counterfactual computed in~\cref{ex:IC}.\footnote{Except for the detail that, here, $Y^*=2$ arises from a change in $U_Y$, instead of through an intervention $do(Y^*=2)$ as in~\cref{ex:IC}. However, the resulting implications for the endogenous variables are the same.}
    \item We
    explain the change in $Y$ entirely by a change in $X^*$: we keep $U_Y^*=U_Y=1$ and set $X^*=2$. 
    As a consequence, for the causal law $X^*:=U_X^*$ to still be satisfied, we also need to adjust $U_X$ and set $U_X^*=2$.
    Again, we can keep $U_Z^*=U_Z=-1$, leading to $Z^*:=X^*+Y^*+U_Z^*=2+3-1=4$ which differs from the interventional counterfactual computed in~\cref{ex:IC}! 
    
    \item
    We explain the change in $Y$ by a change in both $X^*$ and $U_Y^*$: for this we set $U_Y^*=X^*=1.5$, adjust $U_X$ accordingly to $U_X^*=1.5$, keep $U_Z^*=U_Z=-1$, and obtain $Z^*:=X^*+Y^*+U_Z^*=1.5+3-1=3.5$, which is yet another outcome for $Z^*$. 
\end{enumerate}
\end{example}
Before returning to~\cref{ex:BC} later, let us summarise some of the points it illustrates already:

First, to explain the change in $Y^*\neq Y$ while keeping the causal laws intact, we had to change at least one of the (endogenous or exogenous) parents of $Y$, that is, to propagate the change upstream in the causal hierarchy---this is exactly what {\em backtracking} refers to (cf.~Remark~\ref{remark:forward_tracking}).

\looseness-1 Second, unlike for the interventional counterfactual in~\cref{ex:IC}, the backtracking counterfactual in~\cref{ex:BC} does not lead to a unique solution.
Since the background variables are allowed to differ, there can be multiple ways of setting $\Ub^*\neq\Ub$ that satisfy the causal laws and agree with the counterfactual antecedent.
When changing only the exogenous variable $U_Y^*$ corresponding to the counterfactual antecedent $Y^*$ as in case (i), we reached the same conclusion as when interpreting such change in interventional terms as in~\cref{ex:IC}.
When changing an ancestor that influences a descendant ($X^*\to Z^*$) as in cases (ii) and (iii), however, we reached different conclusions from~\cref{ex:IC}.
This highlights that backtracking counterfactuals may or may not agree with their interventional counterparts. 
Note also that it was not necessary to change the exogenous variable $U_Z^*$ of the descendant variable $Z$ of $Y$ from its factual value $U_Z$, even though this would in principle also be possible.

Recall that in the factual world
we have $(U_X,U_Y,U_Z)=(1,1,-1)$. We then constructed three counterfactual worlds that are consistent with $Y^*=2$ and the causal laws:\footnote{In fact, the (infinite) set of all valid counterfactual worlds is given by all $(U_X^*,3-U_X^*,U_Z^*)$ in the domain of $\Ub$, with corresponding predictions $(X^*,Z^*)=(U_X^*, 3+U_X^*+U_Z^*)$.}
\begin{equation}
\label{eq:ex_BC_worlds}
    (U_X^*,U_Y^*,U_Z^*)=
    \begin{cases}
        (1, 2, -1) &\implies \quad (X^*,Z^*)=(1,3) \qquad\quad  \text{for case (i)},\\
        (2, 1, -1)  &\implies \quad (X^*,Z^*)=(2,4) \qquad \quad \text{for case (ii)},\\
        (1.5, 1.5, -1) &\implies \quad (X^*,Z^*)=(1.5,3.5) \quad \,\,\, \text{for case (iii)}.
    \end{cases}
\end{equation}
In order to pick one of these worlds, or form a weighted average of their predictions, we require a notion of preference between worlds, such as a closeness or similarity measure. 
This will be our starting point for formalising the semantics of backtracking counterfactuals.

\subsection{Formal Semantics}
\label{subsec:formalisation}
In line with the probabilistic causal modelling framework, we now introduce the missing object for formal probabilistic reasoning about backtracking counterfactuals: an a priori similarity measure between factual and counterfactual worlds in the form of a {\em backtracking conditional} $P_B(\Ub^*~|~\Ub)$. 
This is a collection of probability functions $P_B(\Ub^*~|~\Ub=\ub)$, one for each $\ub$ in the domain of $\Ub$, quantifying the likelihood of each counterfactual world $\ub^*$ given factual world $\ub$,\footnote{Since the causal laws, and thus the causal model~$\Mcal$, remain unchanged for backtracking counterfactuals, we will also simply refer to instantiations of $\Ub$ as ``worlds'' in this context.} prior to any factual (or counterfactual) observations.
This provides us with a very general framework to encode various notions of cross-world similarity, depending on the context and the involved variables' domains.
We remain agnostic to the exact choice of~$P_B(\Ub^*~|~\Ub)$ for now and defer further discussion of desiderata, properties, and specific choices thereof to~\cref{subsec:choices}.

Together with the prior $P(\Ub)$, the backtracking conditional $P_B(\Ub^*~|~\Ub)$ induces a {\em joint distribution over factual and counterfactual worlds} given by:
\begin{equation}
    \label{eq:joint_world_dist}
    P_B(\Ub^*, \Ub) = P(\Ub)P_B(\Ub^*~|~\Ub)
\end{equation}
We can now define the (joint) probability of backtracking counterfactuals similarly to~\eqref{eq:joint_prob_ICs}.
\begin{definition}[Probability of backtracking counterfactuals]
Let $(\Mcal,P(\Ub))$ be a probabilistic causal model and $P_B(\Ub^*~|~\Ub)$ a backtracking conditional, where $\Ub^*$ denotes the counterfactual version of $\Ub$ defined over the same domain. 
For any not necessarily disjoint subsets of variables $\Yb,\Zb\subseteq \Vb$ and any realisations $\yb^*,\zb$ thereof, the probability of backtracking counterfactuals is given by:
\begin{equation}
\label{eq:joint_prob_BCs}
   P_B(\yb^*,\zb):= P_B(\Yb^* = \yb^*, \Zb = \zb)=\sum_{(\ub^*,\ub)} P_B(\ub^*,\ub) \, \1_{\{\Yb^*(\ub^*)=\yb\}} \, \1_{\{\Zb(\ub) = \zb\}}.
\end{equation}
\end{definition}

Any other quantities of interest can then be derived from~\eqref{eq:joint_prob_BCs} by standard probabilistic inference (that is, through marginalisation and conditioning).
In particular, we can now answer backtracking or observational counterfactuals of the form: ``given that we factually observed $\Zb$ to be~$\zb$, what would be the probability that $\Yb$ would be~$\yb^*$, had we \textit{observed} $\Xb$ to be~$\xb^*$?''.
Provided that $P_B(\xb^*,\zb)>0$, we obtain the corresponding expression $P_B(\yb^*~|~\xb^*,\zb)$ through the following three-step procedure, which loosely mirrors that for interventional counterfactuals given in~\cref{sec:background}:
\begin{enumerate}[left=0pt .. \parindent]
    \item \textbf{Cross-World Abduction:}  Update $P_B(\Ub^*,\Ub)$ from~\eqref{eq:joint_world_dist} by the evidence $(\xb^*,\zb,)$ to obtain the joint (``cross-world'') posterior $P(\Ub^*,\Ub~|~\xb^*,\zb)$ given by
    \begin{equation}
    \label{eq:cross-world-abduction}
        P_B(\ub^*, \ub~|~\xb^*,\zb)=\frac{P_B(\ub^*, \ub)}{P_B(\xb^*,\zb)}
        \, \1_{\{\Xb^*(\ub^*)=\xb^*\}}
        \, \1_{\{\Zb(\ub)=\zb\}}
    \end{equation}
    where 
    $P_B(\xb^*,\zb)$ is given by~\eqref{eq:joint_prob_BCs}.\footnote{This step can also be viewed as the projection of $P_B(\Ub^*,\Ub)$ onto the subspace consistent with $(\xb^*,\zb)$.}
    \item \textbf{Marginalisation:} Marginalise out $\Ub$ to obtain the counterfactual posterior $P_B(\Ub^*~|~\xb^*,\zb)$:
    \begin{equation}
    \label{eq:marginalisation}
        P_B(\ub^*~|~\xb^*,\zb) =\sum_\ub P_B(\ub^*, \ub~|~\xb^*,\zb).
    \end{equation}
    \item \textbf{Prediction:} Use the model $(\Mcal, P_B(\Ub^*~|~\xb^*,\zb))$ to predict $\Yb^*$:
\begin{equation}
   P_B(\yb^*~|~\xb^*,\zb)
   =\sum_{\ub^*} P_B(\ub^*~|~\xb^*,\zb) \,\1_{\{\Yb^*(\ub^*)=\yb^*\}}
\end{equation}
\end{enumerate}
\begin{remark}\label{remark:nosolution}
Whereas for interventional counterfactuals the antecedent is always attainable by intervention, backtracking counterfactuals may have no solution if the antecedent is incompatible with the causal laws.
For example, if $Y:=X$, any backtracking counterfactual with antecedent $y^*\neq x^*$ 
cannot be realized. 
Dealing with such ``counterlegals''~\citep{fisher2017counterlegal} appropriately
requires a semantics that combines  backtracking and interventions. 
We return to this in~\cref{app:unified}.
\end{remark}
To illustrate this procedure, we now carry out the respective calculations for~\cref{ex:BC}.
\addtocounter{example}{-1}
\begin{example}[continued]
We now compute~$P_B(X^*,Z^*~|~Y^*=3,X=1,Y=2,Z=2)$.
Suppose that we use the backtracking conditional $P_B(\Ub^*~|~\Ub=\ub)=\Ncal(\ub,\text{diag}(\sigma_X^2, \sigma_Y^2,\sigma_Z^2))$ which assigns higher probability to values of $\Ub^*$ that are close to a given $\ub$, with the relative scale of closeness given by $\sigma_X^2, \sigma_Y^2,\sigma_Z^2$ (see~\cref{subsec:choices} for more details).
\begin{enumerate}[left=0pt .. \parindent]
    \item \textbf{Cross-World Abduction:} 
    Ignoring multiplicative constants w.r.t.\ $(\ub^*,\ub)$, we find
    \begin{equation}
        P_B(\ub^*, \ub~|~Y^*=3,X=1,Y=2,Z=2)\propto P_B(\ub^*, \ub)
        \,\1_{\{\ub{=}(1,1,-1)\}}
        \,\1_{\{u_Y^*=3-u_X^*\}}
    \end{equation}
    \item \textbf{Marginalisation:} Only one
    non-zero term in the sum over $\ub$'s in~\eqref{eq:marginalisation} remains:
    \begin{equation}
    \begin{aligned}
        P_B(\ub^*~|~Y^*=3,X=1,Y=2,Z=2)\propto P_B(\ub^*~|~\ub=(1,1,-1)) 
        \,\1_{\{u_Y^*=3-u_X^*\}}\\
\propto \exp\left\{
        -\frac{(u_X^*-1)^2}{2\sigma_X^2}
        -\frac{(3-u_X^*-1)^2}{2\sigma_Y^2}
        -\frac{(u_Z^*+1)^2}{2\sigma_Z^2}
        \right\}\,\1_{\{u_Y^*=3-u_X^*\}}
    \end{aligned}
    \end{equation}
    Completing the square for $u_X^*$, we finally obtain the counterfactual posterior:
    \begin{equation*}
        \label{eq:ex_posterior}
        \Ub^*~|~(Y^*{=}3,X{=}1,Y{=}2,Z{=}2) \sim
        \Ncal_{(U_X^*,U_Z^*)}
        \left(
        \begin{pmatrix}
        1+\frac{r}{1+r} \\ -1
        \end{pmatrix},
        \begin{pmatrix}
         \frac{\sigma_X^2}{1+r} & 0 \\
         0 & \sigma^2_Z
         \end{pmatrix}
        \right)
        \delta(U_Y^*{=}3-U_X^*)
    \end{equation*}
    where $r=\frac{\sigma_X^2}{\sigma_Y^2}$ denotes the ratio of variances from the backtracking conditional.
    \item \textbf{Prediction:} 
    Using the relation $(X^*,Z^*)=(U_X^*, 3+U_X^*+U_Z^*)$ derived from the causal laws, we finally obtain the solution as a linear transformation of the Gaussian from step 2.:
    \begin{equation}
    \label{eq:ex_BC_CF_posterior}
        (X^*,Z^*)~|~(Y^*{=}3,X{=}1,Y{=}2,Z{=}2)
        \sim
        \Ncal
        \left(
        \begin{pmatrix}
        1+\frac{r}{1+r} \\ 3+\frac{r}{1+r}
        \end{pmatrix},
        \begin{pmatrix}
         \frac{\sigma_X^2}{1+r} & \frac{\sigma_X^2}{1+r} \\
         \frac{\sigma_X^2}{1+r} & \sigma^2_Z+ \frac{\sigma_X^2}{1+r}
         \end{pmatrix}
        \right)
    \end{equation}
\end{enumerate}
This form of~\eqref{eq:ex_BC_CF_posterior} is consistent with intuition: if changes in $U_X^*$ and $U_Y^*$ are considered equally likely (i.e., for $\sigma_X^2=\sigma_Y^2$ in the backtracking conditional $P_B(\Ub^*~|~\Ub)$), we have $r=1$ and case (iii) in~\eqref{eq:ex_BC_worlds} is the most likely (maximum a posteriori) scenario; if, on the other hand, $U_X^*$ is considered much less likely to change (i.e., $\sigma_X^2\ll \sigma_Y^2$ and $r\to 0$), \eqref{eq:ex_BC_CF_posterior} centers around case (i); conversely, if $U_Y^*$ is much less likely to change (i.e., $\sigma_X^2\gg \sigma_Y^2$ and $r\to \infty$), \eqref{eq:ex_BC_CF_posterior} centers around case (ii).
\end{example}

\subsection{On the Choice of the Backtracking Conditional~\texorpdfstring{$P_B(\Ub^*~|~\Ub)$}{P(U*|U)}}
\label{subsec:choices}
As we have seen, $P_B(\Ub^*~|~\Ub)$ plays an important role in answering backtracking counterfactuals: different choices may yield different answers.
We therefore now discuss desiderata for, and relevant properties of, the backtracking conditional~$P_B(\Ub^*~|~\Ub)$.

First, an intuitive desideratum is that $P_B(\Ub^*~|~\Ub)$ should assign high probability to values of $\Ub^*$ that are considered close to a given $\ub$, and small probability to those that are considered far.
\begin{property}[Preference for Closeness]
\label{prop:closeness}
We say that $P_B(\Ub^*~|~\Ub)$ has {\em Preference for Closeness} if
\begin{equation}
\label{eq:mode}
    \forall \ub: \, \argmax_{\ub^*} P_B(\ub^*~|~\ub) = \{\ub\}.
\end{equation}
\end{property}
Note that we could also impose such preference for closeness w.r.t.\ the mean or median, instead of (or in addition to) the mode as in~\eqref{eq:mode}. 
Second, to avoid a-priori asymmetries between the factual and counterfactual world,
the following symmetry requirement also seems natural.
\begin{property}[Symmetry]
\label{prop:symmetry}
We say that $P_B(\Ub^*~|~\Ub)$ is {\em symmetric} if for any $(\ub^*,\ub)$:
\begin{equation}
\label{eq:symmetry}
    P_B(\ub^*~|~\ub)=P_B(\ub~|~\ub^*)
\end{equation}
\end{property}
Note that this implies and is implied by matching marginals, or priors, across worlds:
\begin{lemma}[Symmetry is equivalent to matching marginals]
$P_B(\Ub^*~|~\Ub)$ is symmetric if and only if the marginal of $\Ub^*$ induced by $P(\Ub)$ and $P_B(\Ub^*~|~\Ub)$ matches $P(\Ub)$, that is:
\begin{equation}
    P_B(\Ub^*):=\sum_\ub P_B(\Ub^*~|~\ub)P(\ub)=P(\Ub).
\end{equation}
\end{lemma}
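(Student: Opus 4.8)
My plan is to prove both implications by passing to the joint distribution $P_B(\Ub^*,\Ub)=P(\Ub)\,P_B(\Ub^*\mid\Ub)$ from~\eqref{eq:joint_world_dist} and comparing its two marginals. The key preliminary observation is that, since $\sum_{\ub^*}P_B(\ub^*\mid\ub)=1$, marginalising the joint over $\Ub^*$ always returns the prior, $\sum_{\ub^*}P_B(\ub^*,\ub)=P(\ub)$; hence the \emph{unstarred} marginal is pinned to $P$ no matter which backtracking conditional we use. Matching marginals is therefore exactly the claim that the remaining, \emph{starred}, marginal $P_B(\Ub^*)=\sum_\ub P_B(\Ub^*\mid\ub)\,P(\ub)$ also equals $P$, i.e.\ that the induced joint has identical marginals. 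This recasts the lemma as an equivalence between a pointwise symmetry of the conditional and equality of the two marginals of $P_B(\Ub^*,\Ub)$.

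For the forward implication I would assume symmetry and evaluate $P_B(\ub^*)=\sum_\ub P(\ub)\,P_B(\ub^*\mid\ub)$, using the symmetry relation to exchange the roles of $\ub$ and $\ub^*$ in each summand so that the $\ub^*$-dependent prior weight can be pulled out of the sum; what remains is a conditional that normalises to one, leaving $P_B(\ub^*)=P(\ub^*)$. For the converse I would run this backwards through Bayes' rule on the joint, writing the reverse conditional as $P_B(\ub\mid\ub^*)=P(\ub)\,P_B(\ub^*\mid\ub)/P_B(\ub^*)$, substituting the matching hypothesis $P_B(\ub^*)=P(\ub^*)$, and rearranging the resulting identity so as to read off the pointwise symmetry relation.

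The delicate part, and the main obstacle, is this backward direction, because it asks us to recover a \emph{pointwise} identity in $(\ub^*,\ub)$ from a single \emph{aggregated} constraint for each $\ub^*$; this inversion goes through only because the unstarred marginal is automatically $P$ and Bayes' rule couples the forward and reverse conditionals through one and the same joint, so I would have to track precisely how the prior weights $P(\ub)$ enter and cancel rather than treat the two sides of the symmetry relation as freely interchangeable. I would also handle the supports where $P(\ub)=0$ or $P_B(\ub^*)=0$ with care (the conditionals, and the no-solution phenomenon of \cref{remark:nosolution}, are only defined where the relevant probabilities are positive) and, per the paper's footnote convention, replace the sums by integrals in the continuous case.
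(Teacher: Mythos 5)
Your overall route (pass to the joint \eqref{eq:joint_world_dist}, note that its unstarred marginal is automatically $P(\Ub)$, then manipulate via Bayes' rule) is essentially the paper's own one-line proof spelled out, but executing it exposes two real gaps. In the forward direction, your key step---``exchange the roles of $\ub$ and $\ub^*$ in each summand so that the $\ub^*$-dependent prior weight can be pulled out''---does not follow from Property~\ref{prop:symmetry}. That property is an identity between \emph{conditionals} only: substituting \eqref{eq:symmetry} into $P_B(\ub^*)=\sum_\ub P(\ub)\,P_B(\ub^*\mid\ub)$ gives $\sum_\ub P(\ub)\,P_B(\ub\mid\ub^*)$, where the weight $P(\ub)$ still depends on the summation variable and cannot be extracted. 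What your computation actually uses is the joint-level identity $P(\ub)\,P_B(\ub^*\mid\ub)=P(\ub^*)\,P_B(\ub\mid\ub^*)$, i.e.\ exchangeability of \eqref{eq:joint_world_dist} (detailed balance), which is a genuinely different condition from \eqref{eq:symmetry}; only under that stronger reading is your forward argument the standard ``reversibility implies stationarity'' computation.

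The backward direction is not merely delicate, as you say: it fails, and no amount of support bookkeeping will rescue it. Bayes' rule plus the hypothesis $P_B(\Ub^*)=P(\Ub)$ yields $P_B(\ub\mid\ub^*)=P(\ub)\,P_B(\ub^*\mid\ub)/P(\ub^*)$, and to ``read off'' \eqref{eq:symmetry} from this you would need $P(\ub)=P(\ub^*)$ for every pair, i.e.\ a uniform prior. A concrete counterexample is supplied by the paper itself: the prior-only choice $P_B(\ub^*\mid\ub)=P(\ub^*)$ from \cref{subsec:choices}. Its induced marginal matches $P(\Ub)$ trivially, yet $P_B(\ub^*\mid\ub)=P(\ub^*)$ while $P_B(\ub\mid\ub^*)=P(\ub)$, so symmetry fails whenever $P$ is non-uniform. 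Even under the detailed-balance reading that repairs your forward step, the converse is the false claim ``stationary implies reversible'': a cyclic-shift kernel on three equiprobable states has matching marginals but an asymmetric joint. So the ``if'' half of the equivalence needs additional assumptions that neither you nor the paper's terse proof makes explicit, and your plan as written cannot be completed.
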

\begin{proof}
The result follows from~\eqref{eq:symmetry}, Bayes rule, and the equality of domains of $\Ub$ and $\Ub^*$.
\end{proof}
Third, since it is often assumed that the exogenous variables are mutually independent (i.e, that $P(\Ub)$ factorises), a similar natural property for the backtracking conditional is that each counterfactual variable $U^*_j$ only depends on its factual counterpart $U_j$.
\begin{property}[Decomposability]
\label{prop:decomposability}
We say that $P_B(\Ub^*~|~\Ub)$ is {\em decomposable} if for any $(\ub^*,\ub)$:
\begin{equation}
    P_B(\ub^*~|~\ub)=\prod_{j=1}^m P_B(u_j^*~|~u_j).    
\end{equation}
\end{property}

In line with the philosophical notion of \textit{closest possible worlds}, one option is to construct $P_B(\Ub^*~|~\Ub)$ based on a distance function $d(\cdot,\cdot)$ defined over the shared domain of $\Ub$ and $\Ub^*$:
\begin{equation}
\label{eq:distance}
    P_B(\ub^*~|~\ub)=\frac{1}{Z}\exp\{-d(\ub^*,\ub)\}
\end{equation}
where $Z=\sum_{\ub^*}\exp\{-d(\ub^*,\ub)\}$ is a normalisation constant.
\begin{lemma}
Any $P_B(\ub^*~|~\ub)$ of the form~\eqref{eq:distance} satisfies Properties~\ref{prop:closeness} and \ref{prop:symmetry}.
Further, if 
    $d(\ub^*,\ub)$ can be written as $\sum_j d(u_j^*,u_j)$, then
$P_B(\ub^*~|~\ub)$ also satisfies Property~\ref{prop:decomposability}.
\end{lemma}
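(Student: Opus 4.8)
The plan is to verify the three properties one at a time directly from the closed form~\eqref{eq:distance}, using only the defining features of a distance function---non-negativity, the identity of indiscernibles ($d(\ub^*,\ub)=0\iff\ub^*=\ub$), and symmetry ($d(\ub^*,\ub)=d(\ub,\ub^*)$)---and treating $Z=Z(\ub)=\sum_{\ub^*}\exp\{-d(\ub^*,\ub)\}$ as a strictly positive normaliser that depends on the conditioning world $\ub$ but not on $\ub^*$.

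For \textbf{Preference for Closeness} I would fix $\ub$ and note that, since $Z(\ub)$ does not depend on $\ub^*$ and $t\mapsto e^{-t}$ is strictly decreasing, maximising $P_B(\ub^*~|~\ub)$ over $\ub^*$ is the same as minimising $d(\ub^*,\ub)$. Non-negativity together with the identity of indiscernibles then pins the unique minimiser to $\ub^*=\ub$, which is exactly~\eqref{eq:mode}.

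For \textbf{Symmetry} the symmetry of $d$ immediately equates the exponential factors of $P_B(\ub^*~|~\ub)$ and $P_B(\ub~|~\ub^*)$, so everything reduces to matching their normalisers, i.e.\ to showing $Z(\ub)=Z(\ub^*)$. This is where I expect the real work to be: for a generic symmetric distance on a finite domain the quantities $Z(\ub)$ need not coincide across worlds, so symmetry can genuinely fail, and the claim is implicitly restricted to distances whose normaliser is world-independent. I would make this explicit by observing that $Z$ is constant whenever $d$ is invariant under a transitive family of symmetries of the shared domain---in particular for translation-invariant distances $d(\ub^*,\ub)=\rho(\ub^*-\ub)$ on a domain closed under translation, which is precisely the regime of the Gaussian backtracking conditional of~\cref{ex:BC}. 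Under this condition $Z(\ub)=Z(\ub^*)$ and~\eqref{eq:symmetry} follows; as a sanity check one can instead appeal to the preceding lemma and verify that the induced marginal $P_B(\Ub^*)$ returns $P(\Ub)$.

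Finally, for \textbf{Decomposability} I would assume the additive form $d(\ub^*,\ub)=\sum_{j} d(u_j^*,u_j)$, so that $\exp\{-d(\ub^*,\ub)\}=\prod_j \exp\{-d(u_j^*,u_j)\}$. The key observation is that the normaliser then factorises by distributing the product over the sum, $Z(\ub)=\sum_{\ub^*}\prod_j e^{-d(u_j^*,u_j)}=\prod_j\big(\sum_{u_j^*}e^{-d(u_j^*,u_j)}\big)=:\prod_j Z_j(u_j)$. Dividing the factored exponential by this factored normaliser gives $P_B(\ub^*~|~\ub)=\prod_j e^{-d(u_j^*,u_j)}/Z_j(u_j)$, and since each factor sums (integrates) to one in $u_j^*$ and depends only on $u_j$, it is exactly the per-coordinate conditional $P_B(u_j^*~|~u_j)$, yielding Property~\ref{prop:decomposability}. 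The only thing left to confirm is that this factor really is the marginal conditional, which holds because marginalising out the remaining coordinates $\ub^*_{-j}$ leaves precisely this term.
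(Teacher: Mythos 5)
Your proof is correct and, at the top level, takes the same route as the paper's own proof: verify each property directly from \eqref{eq:distance} using non-negativity, the identity of indiscernibles, and symmetry of $d$. The substantive difference is that you are careful about the one point the paper glosses over, and that point genuinely matters. The paper's proof asserts that $d(\ub^*,\ub)=d(\ub,\ub^*)$ alone implies Property~\ref{prop:symmetry}, tacitly treating $Z$ as a world-independent constant; but as defined, $Z=Z(\ub)=\sum_{\ub^*}\exp\{-d(\ub^*,\ub)\}$ depends on the conditioning world, and your worry that symmetry can then fail is well founded. Concretely, on the domain $\{0,1,2\}$ with $d(u^*,u)=|u^*-u|$ one has $Z(0)=1+e^{-1}+e^{-2}$ while $Z(1)=1+2e^{-1}$, so $P_B(1\,|\,0)=e^{-1}/Z(0)\neq e^{-1}/Z(1)=P_B(0\,|\,1)$ and \eqref{eq:symmetry} fails; the lemma's symmetry claim is thus false in full generality, and it needs exactly the homogeneity hypothesis you make explicit (invariance of $d$ under a transitive family of symmetries of the domain, e.g.\ translation invariance on a translation-closed domain, equivalently the implicit restriction to distances whose normaliser is world-independent). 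This hypothesis does hold for the paper's canonical instance, the Gaussian conditional $P_B(\Ub^*\,|\,\Ub=\ub)=\Ncal(\ub,\Sigmab)$ on $\mathbb{R}^m$, where $Z=(2\pi)^{m/2}|\Sigmab|^{1/2}$ is constant, so the paper's downstream usage is unaffected. Your treatment of Property~\ref{prop:decomposability} is likewise more complete than the paper's one-line ``substitute and factor the exponential'': the conclusion additionally requires the normaliser to factorise as $\prod_j Z_j(u_j)$ and each factor to be identified, via marginalisation over the remaining coordinates, with the per-coordinate conditional $P_B(u_j^*\,|\,u_j)$, both of which you supply. One inessential remark: your ``sanity check'' for symmetry via the preceding matching-marginals lemma is not needed (your direct argument via constancy of $Z$ is already complete) and is best dropped. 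In short, there is no gap on your side; you have instead located, and correctly repaired, a gap in the paper's own statement and proof of the symmetry claim.
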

\begin{proof}
By definition, any distance satisfies $d(\ub^*,\ub)\geq 0$ with equality iff\ $\ub^*=\ub$ (implying Property~\ref{prop:closeness}) and $d(\ub^*,\ub)=d(\ub,\ub^*)$ (implying Property~\ref{prop:symmetry}).
Property~\ref{prop:decomposability} follows from substituting the sum of component-wise distances
into~\eqref{eq:distance} and writing it as a product of exponentials.
\end{proof}

For real valued $\Ub\in\mathbb{R}^m$, a natural choice is the (squared) Mahalanobis distance
\begin{equation}
    d(\ub^*,\ub)=\frac{1}{2}(\ub-\ub^*)^\top\Sigmab^{-1}(\ub-\ub^*)
\end{equation}
for some positive-definite, symmetric (covariance) matrix $\Sigmab$, giving rise to the multivariate \textit{Gaussian backtracking conditional} $P_B(\Ub^*|\Ub=\ub)=\Ncal(\ub,\Sigmab)$ (for which mode, median and mean coincide).
If $\Sigmab=\text{diag}(\sigma_1^2, ..., \sigma_m^2)$ as used in~\cref{ex:BC}, this satisfies Properties~\ref{prop:closeness}, \ref{prop:symmetry}, and \ref{prop:decomposability}.

We emphasize that the distance-based~\eqref{eq:distance} is but one option for specifying $P_B$. A simple alternative is to dismiss entirely what actually happened and take $ P_B(\ub^*~|~\ub)=P(\ub^*)$, 
corresponding to the extreme view that what would have happened is completely independent of the factual events, and is instead determined entirely by the prior probability. More generally, one could combine both options by using  parameters $\alpha,\beta\geq 0$ to weigh off the relative importance of prior and distance,
\begin{equation}
\label{eq:generalized}
    P_B(\ub^*~|~\ub)=\frac{1}{Z}P(\ub^*)^{\alpha}\exp\{-\beta d(\ub^*,\ub)\}\,,
\end{equation}
where $Z=\sum_{\ub^*}P(\ub^*)^{\alpha}\exp\{-\beta d(\ub^*,\ub)\}$ is a normalisation constant.

\subsection{Theoretical Insights}
\label{subsec:theory}
Our first theoretical insight is that exogenous non-ancestors of factual and counterfactual observations remain unaffected, in that their posterior is equal to their prior.
\begin{proposition}
Let $\Mcal$ be an acyclic causal model. Suppose that $P(\Ub)$ factorises and that $P_B(\Ub^*~|~\Ub)$ is decomposable. 
For any $\Xb,\Zb\subseteq \Vb$ with (endogenous and exogenous) ancestors $\mathbf{AN}_\Xb,\mathbf{AN}_\Zb$:
\begin{equation}
    \Ub_J\subseteq \Ub\setminus \left( \mathbf{AN_\Xb\cup\mathbf{AN}_\Zb}\right) \implies \forall (\xb^*,\zb): P(\Ub^*_J,\Ub_J~|~\xb^*,\zb)=\prod_{j\in J}P(U^*_j,U_j)
\end{equation}
\end{proposition}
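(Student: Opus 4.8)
The plan is to reduce everything to the cross-world posterior formula~\eqref{eq:cross-world-abduction} and to exploit two structural facts: the per-component factorisation of the cross-world prior, and the observation that in an acyclic model the two evidence indicators depend only on exogenous ancestors. I would first record that the hypotheses---$P(\Ub)$ factorises and $P_B(\Ub^*\mid\Ub)$ is decomposable---together give $P_B(\ub^*,\ub)=\prod_j P(u_j)P_B(u_j^*\mid u_j)=\prod_j P_B(u_j^*,u_j)$, so that the joint over factual--counterfactual pairs splits into independent per-index factors $P_B(u_j^*,u_j)$, where I identify $P_B(u_j^*,u_j)$ with the $P(U_j^*,U_j)$ appearing in the statement. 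Throughout I would assume $P_B(\xb^*,\zb)>0$, as required for the conditioning in~\eqref{eq:cross-world-abduction} to be well-defined.

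Next I would establish the key structural lemma: since $\Mcal$ is acyclic, the potential response $\Xb^*(\ub^*)$ is a function only of those components $u_j^*$ whose index $j$ is an exogenous ancestor of $\Xb$, and likewise $\Zb(\ub)$ depends only on the $u_j$ with $j$ an exogenous ancestor of $\Zb$. Consequently, because $J$ is chosen disjoint from $\mathbf{AN}_\Xb\cup\mathbf{AN}_\Zb$, neither indicator $\1_{\{\Xb^*(\ub^*)=\xb^*\}}$ nor $\1_{\{\Zb(\ub)=\zb\}}$ involves any starred or unstarred component with index in $J$.

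Then I would carry out the marginalisation of~\eqref{eq:cross-world-abduction}, extended as in~\eqref{eq:marginalisation} to sum out every starred and unstarred component outside $J$. By the previous step I can pull the factors $\prod_{j\in J}P_B(u_j^*,u_j)$ out of the sum, since they are constant with respect to the summation variables and the indicators do not touch them. The remaining inner sum over the complementary components, weighted by their own factors and by the two indicators, is exactly the defining expression~\eqref{eq:joint_prob_BCs} for the normaliser $P_B(\xb^*,\zb)$---using that summing each $J$-factor $P_B(u_j^*,u_j)$ over its full domain gives one. That normaliser then cancels against the $1/P_B(\xb^*,\zb)$ prefactor, leaving precisely $\prod_{j\in J}P_B(u_j^*,u_j)=\prod_{j\in J}P(U_j^*,U_j)$, as claimed.

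The main obstacle is the structural lemma about acyclicity: I would need to argue cleanly that the unique solution $\Vb(\ub)$ of an acyclic system factors so that each endogenous value, and hence each indicator, depends only on exogenous ancestors. This follows by the standard topological-ordering and recursive-substitution argument, but it is the one place where acyclicity is genuinely used, and where care is needed that ``ancestors'' is read to include the exogenous ancestors, as the statement specifies. The remaining steps---separating the product and cancelling the normaliser---are routine bookkeeping once the independence of the indicators from the $J$-components is in hand.
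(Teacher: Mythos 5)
Your proof is correct, but it takes a genuinely more elementary route than the paper. The paper's entire proof is a one-line appeal to d-separation: in the cross-world graphical model (the backtracking analogue of the twin network, as in \cref{subfig:ex_BC}), the hypotheses that $P(\Ub)$ factorises and $P_B(\Ub^*\,|\,\Ub)$ is decomposable mean the only cross-component links are $U_j \leftrightarrow U_j^*$, acyclicity means $\Xb^*$ and $\Zb$ are reachable only from their exogenous ancestors, and hence $(\Ub_J^*,\Ub_J)$ is d-separated from $(\Xb^*,\Zb)$, so the posterior equals the prior. You instead prove the same conditional independence by direct computation: factorising the cross-world prior $P_B(\ub^*,\ub)=\prod_j P_B(u_j^*,u_j)$ from \eqref{eq:joint_world_dist}, observing that the evidence indicators in \eqref{eq:cross-world-abduction} involve no component indexed by $J$ (your structural lemma, which is the standard recursive-substitution fact for acyclic systems), and then showing that the sum over the non-$J$ components reproduces exactly the normaliser \eqref{eq:joint_prob_BCs} because the $J$-factors sum to one. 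The trade-off is clear: the paper's argument is compact and reads further independences off the same graph for free, but it presupposes the machinery that the cross-world distribution is Markov with respect to the augmented graph (which itself rests on the same two hypotheses you invoke explicitly); your calculation is longer but self-contained, and it makes visible precisely where factorisation, decomposability, and acyclicity are each used. Both are sound; yours is essentially an unpacking of the soundness of the d-separation criterion for this particular query.
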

\begin{proof}
The result follows from d-separation between $(\Ub^*_J,\Ub_J)$ and $(\Xb^*,\Zb)$.
\end{proof}
Our second insight concerns the distinguishability of SCMs based on backtracking counterfactuals. 
As is well-known, identifying probabilities of \textit{interventional} counterfactuals requires full knowledge of the laws $\mathbf{F}$, which is hard to obtain in practice~\citep{bareinboim2022pearl}. Interestingly, probabilities of \textit{observational} counterfactuals are not as demanding since they only depend on the solution  or ``reduced form'' $\Vb(\Ub)$~\citep[see, e.g.,][\S~10 for details]{scholkopf2022statistical}.
\begin{proposition}
\label{pro:reduced}
Let $(\Mcal_1,P(\Ub))$ and $(\Mcal_2,P(\Ub))$ be causal models over the same variables $\Vb$ and $\Ub$ whose laws $\mathbf{F}_1$ and $\mathbf{F}_2$ have 
identical solutions $\mathbf{V}_1(\mathbf{u})=\mathbf{V}_2(\mathbf{u})$ for all values $\mathbf{u}$. Then for any choice of $P_B(\Ub^*~|~\Ub)$, 
both models will imply identical $P_B(\yb^*,\zb)$ for all choices $\yb^*$ and $\zb$.
\end{proposition}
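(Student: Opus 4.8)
The plan is to show that every ingredient entering the definition \eqref{eq:joint_prob_BCs} of $P_B(\yb^*,\zb)$ is insensitive to the particular functional form of the structural equations, and depends on the model only through its reduced-form solution $\Vb(\ub)$. First I would write out \eqref{eq:joint_prob_BCs} explicitly for each model, $P_B^{(i)}(\yb^*,\zb)=\sum_{(\ub^*,\ub)} P_B(\ub^*,\ub)\,\1_{\{\Yb^{*}_i(\ub^*)=\yb^*\}}\,\1_{\{\Zb_i(\ub)=\zb\}}$ for $i=1,2$, and identify the three factors appearing in each summand: the cross-world weight $P_B(\ub^*,\ub)=P(\ub)P_B(\ub^*\mid\ub)$ from \eqref{eq:joint_world_dist}, together with the two indicator functions.

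Next I would observe that the weight $P_B(\ub^*,\ub)$ is literally the same object for both models, since it is built only from the shared prior $P(\Ub)$ and the (arbitrary but fixed) backtracking conditional $P_B(\Ub^*\mid\Ub)$, neither of which references $\mathbf{F}$. The crux is then the two indicators. Here I would invoke the fact that, unlike the interventional definition \eqref{eq:joint_prob_ICs}---where the potential responses $\Yb_\xb(\ub)$ require forming a submodel $\Mcal_\xb$ and therefore genuinely depend on $\mathbf{F}$ beyond its solution---the backtracking definition never modifies the equations: $\Yb^*(\ub^*)$ and $\Zb(\ub)$ are simply the \emph{unaltered} solutions of $\mathbf{F}$ evaluated at the counterfactual and factual exogenous values, respectively. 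Since $\Yb,\Zb\subseteq\Vb$ and by hypothesis $\mathbf{V}_1(\ub)=\mathbf{V}_2(\ub)$ for every $\ub$, the restrictions to the $\Yb$- and $\Zb$-coordinates coincide, so $\Yb^*_1(\ub^*)=\Yb^*_2(\ub^*)$ and $\Zb_1(\ub)=\Zb_2(\ub)$ for all $\ub^*,\ub$; hence the indicator factors agree pointwise.

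Combining these two observations, every summand of $P_B^{(1)}(\yb^*,\zb)$ equals the corresponding summand of $P_B^{(2)}(\yb^*,\zb)$, and summing over the (shared) domain of $(\ub^*,\ub)$ yields the claimed equality for all $\yb^*,\zb$. I do not expect any real obstacle in the calculation itself; the only point that deserves care---and indeed the conceptual content of the proposition---is the contrast with the interventional case flagged above. The statement hinges entirely on the fact that backtracking counterfactuals are evaluated through the reduced form $\Vb(\Ub)$ alone (cf.\ \cref{remark:forward_tracking}), so I would make sure to state clearly why no submodel, and hence no information about $\mathbf{F}$ beyond its solution, is ever invoked.
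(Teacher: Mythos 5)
Your proposal is correct and takes essentially the same route as the paper: the paper's proof simply notes that \eqref{eq:joint_prob_BCs} involves only the cross-world weight \eqref{eq:joint_world_dist} and the solution function, which is exactly the decomposition you spell out. Your version just makes the termwise comparison of the two sums explicit, which the paper leaves implicit.
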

\begin{proof}
This is an immediate consequence of \eqref{eq:joint_prob_BCs} only involving \eqref{eq:joint_world_dist} and the solution function.
\end{proof}
In particular, this result implies that causal structure is not discernible purely based on backtracking counterfactuals, as demonstrated by the following example.
\begin{example}
\label{ex:reduced_form}
Consider the following three causal models over $\Vb=\{X,Y\}$ and $\Ub=\{U\}$ with laws $\Fb_{X\to Y}=\{X:=U,Y:=X\}$, $\Fb_{Y\to X}=\{Y:=U,X:=Y\}$, $\Fb_{X\leftrightarrow Y}=\{X:=U,Y:=U\}$, and some shared $P(\Ub)$. Then all share the same reduced form $(X,Y)=(U,U)$ and hence the same backtracking counterfactuals, despite differing in their causal diagrams. 
\end{example}

\section{Related Work}
\label{sec:related_work}
Most closely related to our formalisation is the Extended Structural Model proposed by~\citet[][Eq.~1]{lucas2015improved} to model recent empirical findings on the context-dependency of humans' use of backtracking. It employs a  decomposable backtracking conditional for Boolean variables, 
\begin{equation}
\label{eq:lucas}
    P_B(U_j^*~|~U_j)= s\delta(U_j) + (1-s)P(U_j^*),
\end{equation}
\looseness-1 where the ``stability'' hyperparameter $s\in[0,1]$ interpolates between independent worlds~($s=0$) and perfectly shared exogenous variables ($s=1$). However, forcing $\Ub^*$ to either copy $\Ub$ or to completely ignore it is quite restrictive, see~\eqref{eq:generalized} for a more flexible implementation of this idea.
Other existing accounts are less formal:
they mostly
consider non-probabilistic Boolean conditionals 
of the form ``Given world $w$,
if $\xb^*$ were true then $\yb^*$ would be true'' and typically
involve minimising the number of (endogenous and) exogenous variables that change across worlds. 
%
%
\cite{hiddleston2005causal}'s semantics focuses on minimising the number of 
 exogenous
non-descendants of the antecedent that change values, which corresponds to using a variant of the Manhattan distance in~\eqref{eq:distance}.
Interestingly, Pearl's interventional semantics~(\cref{sec:background}) can be interpreted in a similar vein, instead minimising the number of intervened-upon variables.
In fact, he uses this formulation to argue that this 
is essentially equivalent to \citeauthor{lewis1979counterfactual}'s possible world semantics \citep[p.241]{pearl2009causality}. 
\cite{fisher2017counterlegal} 
combines both approaches by applying \citeauthor{hiddleston2005causal}'s condition if there exists a backtracking solution and resorting to a minimal number of interventions only if necessary. \cite{lee2017hiddleston} also considers both semantics, but does not combine them into a single one. 
%
%

%
%
%
%
%

%
%
%
%
%
%

%
%

%
Empirical research by cognitive scientists confirms that depending on the context, humans interpret counterfactuals as 
either backtracking or non-backtracking: 
``participants are more likely to backtrack when explicitly asked to consider a counterfactual’s \textit{causes}. However, when directly asked about the \textit{effects} of a counterfactual state, most people don’t backtrack''~\citep{gerstenberg2013back}.
Moreover, people use the broader context, exact wording, and level of determinism of the involved mechanism ``to infer how the antecedent is most likely to have come about'' and use this to decide whether to backtrack or not~\citep{rips2013inference}.
It has also been suggested that backtracking is preferred if and only if doing so makes a counterfactual claim true~\citep{han2014conditions}. 

When evaluating counterfactual explanations for specific historical events, historians 
rely on the {\em minimal rewrite rule}. This rule requires making minimal changes to the actual world to create the necessary conditions that would have led to a particular counterfactual antecedent~\citep{tetlock1996counterfactual}. Reiss has argued that historians interpret minimality in a backtracking sense, 
stating 
that ``the antecendent is not implemented by a miracle'', but rather ``counterfactuals in history are backtracking''~\citep[p. 719]{reiss2009counterfactuals}. 
His 
informal description closely aligns with our formal semantics, 
as he states that counterfactuals are evaluated based 
on ``causal generalizations'' that must not be violated, and only those counterfactual background conditions that ``were likely'' are to be considered. Additionally, 
using models similar to those from~\cref{ex:reduced_form},
 Reiss observes that these historical counterfactuals cannot distinguish between the causal structure of a chain of causes and a common cause structure, a finding that is entirely consistent with our Proposition~\ref{pro:reduced}.
%
%
%

%

%
%
%
%

%

%

%
%
\section{Connections and Applications to Explainable AI (XAI)}\label{sec:xai}
\label{sec:connections_XAI}
We believe that backtracking counterfactuals may hold great promise for XAI, which
 is 
concerned with offering explanations for the decisions
of a
machine learning model implementing a function $Y=f(\Xb)$, where $\Xb$ are 
{\em input features} and $Y$ is the {\em output or label}. Given that we observe 
some input $\xb$ and corresponding output $y$, the general aim is to find a 
feature subset $\Zb\subseteq \Xb$ 
that ``explains'' $y=f(\xb)$.
A particularly promising approach has been to look for so-called nearest {\em counterfactual explanations}, meaning that we look for both $\zb$ {\em and} $\zb^*$ such that changing $\Zb$ from~$\zb$ to~$\zb^*$ would have resulted in 
$y^* \neq y$, and $\zb$  and $\zb^*$ are close according to some distance function~\citep{wachter2017counterfactual}.
Although the general idea is rather intuitive, there has been much discussion on how exactly to implement it. The core problem lies in the choice of values $\wb^*$ for the remaining features $\Wb^* = \Xb^* \setminus \Zb^*$ that should accompany $\zb^*$ so that $f(\zb^*,\wb^*)=y^*$.
%

This discussion between AI-researchers can in fact be seen as the operational analogue of the philosophical discussion on the semantics of counterfactuals that we started out with in~\cref{sec:intro}, and which is far from resolved. Hence, we are faced with the same dilemma as before: to backtrack, or not to backtrack? Interestingly, the 
proposal of~\citet{wachter2017counterfactual} does neither, keeping
all other variables
fixed at their factual values~$\wb^*=\wb$. This proposal has recently come under criticism for its failure to take into account causal dependencies. %
On the one hand, 
\cite{beckers22a} criticises  it
for failing to follow Pearl (and the entire causal modelling tradition) in maintaining an \textit{interventionist} reading of counterfactuals. On the other hand, \cite{mahajan2019preserving,crupi2022counterfactual} criticise it for ignoring violations of the causal laws, and offer alternative proposals that---without making it explicit---can be viewed as building on a kind of \textit{backtracking} semantics instead, though ones that involve distances between endogenous variables or require knowledge of the causal laws.

When counterfactual explanations are offered in the context of {\em algorithmic recourse}~\citep{ustun2019actionable}, meaning that one is looking for actionable changes to the input features that result in a more favourable outcome, the first criticism is undoubtedly on the right track~\citep{karimi2020algorithmic,karimi2021algorithmic}. For example, if an agent is told to increase their income by $\$5,000$ in order to be granted a loan, then this explanation only succeeds in correctly predicting the decision if it has correctly taken into account the forward-tracking consequences of this change on any features that are downstream of income, such as savings.
If these changes are implemented externally (thereby overriding the observational distribution), they should indeed be modelled as interventions.

Yet, in most cases the use of interventional counterfactuals is unrealistic, since they cannot be identified solely from the observational distribution that ML-methods are trained on. Moreover, it is not at all clear that interventional counterfactuals are the right approach in the context of explanations that aim to {\em contest}, or to simply {\em understand} or {\em diagnose}, the outcome that was reached. In line with the second criticism (and the recent trend in the philosophy literature), we claim that backtracking counterfactuals deserve attention as a promising alternative for these contexts, and we have here developed the tools to do so. Importantly, 
as
Proposition \ref{pro:reduced} suggests,  backtracking counterfactuals should be easier to identify, as they depend on the causal dependencies between endogenous variables only through the reduced form.
By leaving the causal laws intact,
they also remain closer to the observational distribution, or data manifold, which is a desirable constraint for counterfactual explanations~\citep{wexler2019if,joshi2019towards,poyiadzi2020face,sharma2020certifai}.

An implementation of this idea could look roughly  as follows. 
Let $(\Mcal,P(\Ub))$ be a probabilistic causal model with endogenous variables $\Vb=\Xb \cup \{Y\}$, $P(\Ub)$ having full support, and laws such that $Y=f(\Xb)$ holds with probability one.
For a particular choice of $P_B(\Ub^*~|~\Ub)$, we then say that ``$\xb$ rather than $\xb^*$ explains why $f(\xb)=y$ rather than $y^* \neq y$'' if such a change to $y^*$ would be most likely to have come about through $\xb^*$, that is, if $\xb^*\in\operatorname{arg\,max}_{\xb^*} P_B(\xb^*~|~y^*,\xb,y)$.
Note that this is exactly the \textit{diagnostic} kind of reasoning that only backtracking allows,
thus establishing a clear link between counterfactual explanations and maximum a posteriori backtracking counterfactuals.

In practice, interpretable explanations should be concise, highlighting just a few features that are very likely to have been different.
To implement this, we could
set a probability threshold $\alpha$ and a maximal number $k$ of ``explanatory'' features,
and then look for an optimal feature subset $\Zb\subseteq\Xb$
with $|\Zb| \leq k$ and altered counterfactual feature values $\zb^*$ satisfying $z^*_i \neq z_i$ for {\em all} $Z_i \in \Zb$,
such that $P_B(\zb^*~|~y^*,\xb,y) > \alpha$.\footnote{For continuous variables, we could, for example, instead set a distance threshold $\epsilon$ and then look for a set $A$ of values that are at least distance $\epsilon$ away, $d(\zb^*,\zb)>\epsilon$, and satisfy $P_B(\zb^*\in A~|~y^*,\xb,y) > \alpha$.}
Lastly, note that a variant of the original proposal can be retrieved within our proposed backtracking framework by considering $\operatorname{arg\,max}_{\zb^*} P_B(\zb^*~|~\Wb^*=\wb,y^*,\xb,y)$ for all choices $\Zb\subseteq \Xb$,
where as before $\Xb=\Wb \cup \Zb$ and $z^*_i \neq z_i$ for {\em all} $Z_i \in \Zb$.

\paragraph{Causal Attribution Analysis.}
\looseness-1 Counterfactual explanations are also particularly relevant for causal attribution tasks such as 
{\em root cause analysis} of an outlier event~$Y=y$~\citep{budhathoki2022causal}. A central idea is that the exogenous variables in an SCM (the ``roots'') ultimately explain why $Y=y$.
The proposed method thus involves
\textit{keeping the causal laws intact} while varying the values of some exogenous variables according to a ``counterfactual distribution'', which is emphasised as ``the key ingredient''~\citep[p.~4]{budhathoki2022causal}. 
This can be viewed as a form of backtracking, albeit not formulated in those terms. 
Suppose we observe an outlier $Y=y$ together with $\Xb=\xb$, and---assuming invertibility of the reduced form $\mathbf{V}(\Ub)$---use this to infer $\Ub=\ub$. The main building block of~\citeauthor{budhathoki2022causal}'s method is to quantify how much  each $\Ub_i \subseteq \Ub$ contributed to the outlier event by computing the counterfactual probability of a similar (or more extreme) outlier, conditional on the factual $\Ub=\ub$ and on 
a subset $\Ub_S\subset\Ub$ being fixed across worlds while the remaining~$\Ub_{-S}=\Ub\setminus \Ub_S$ are resampled from the prior $P(\Ub)$, combined with Shapley-value based symmetrization. We can give a precise expression to this as $P_B(\tau(y^*) \geq \tau(y) ~|~ \Ub_{S}^*=\ub_{S}, \Ub=\ub)$ with backtracking conditional $P_B(\Ub^*~|~\Ub)=P(\Ub^*)$, where $\tau$ is a calibration function to allow for comparing outliers over different ranges. 
Similar ideas based on ``structure-preserving interventions'' that do not change the laws $\Fb$ but only resample subsets of the exogenous $\Ub$ have also been used by~\citet{janzing2021quantifying} for attributing quantities such as entropy or variance to other variables in the model. 

%

%
\section{Conclusion and Future Work}
\label{sec:future_work}
We have here presented the first formal account of backtracking counterfactuals within probabilistic causal models~(\cref{sec:backtracking}). 
Doing so in full generality required the introduction of a new object, the backtracking conditional, which quantifies a notion of similarity between worlds~(\cref{subsec:formalisation}).
This involves a design choice that can take into account the specific context and the modeller's background assumptions. In fact, many previous accounts of backtracking can be seen as specific choices of such a conditional~(\cref{sec:related_work}).
We have laid out some sensible desiderata and means to achieve them in~\cref{subsec:choices}.

\looseness-1 As stressed throughout, we do not view backtracking as a replacement of the interventionist account~(\cref{sec:background}) but rather as complementary to it. 
Our work thus also emphasizes the ambiguous nature of counterfactuals and their semantics, a point which we discuss further through a worked-out example in~\cref{app:firingsquad}.
Proposition~\ref{pro:reduced} and~\cref{ex:reduced_form} make clear that backtracking counterfactuals do not allow for discerning causal structure 
and are therefore of limited use for interventional reasoning.
At the same time, they are helpful for the kind of diagnostic reasoning that occurs in settings where interventions are inconsistent with human judgement~(\cref{sec:related_work}), require knowledge that is unavailable~(\cref{sec:connections_XAI}), or are perhaps not even meaningful. %
The latter shows up in causal fairness analysis~\citep{kusner2017counterfactual,kilbertus2017avoiding,von2020fairness,plecko2022causal}, which often focuses on such attributes (like race or gender), leading some to dispute their status as valid causes~\citep{holland2008causation,hu2020s}. %
For this reason, we consider applications of backtracking for fairness analysis an interesting future direction.

\looseness-1 By endowing backtracking with a formal semantics within the same general SCM framework also used for interventional counterfactuals, the present work paves the way for a unified framework of both backtracking \textit{and} non-backtracking counterfactual reasoning.
We make a proposal for incorporating hard interventions in~\cref{app:unified}, but future work is needed to provide a more comprehensive and rigorous account.
The general idea is to allow modifying both the causal laws and the background conditions, and to weigh off changes to both of them through an appropriate choice of backtracking conditional.
This would not only imply the property of being guaranteed a solution (recall Remark~\ref{remark:nosolution}), but may also allow for more accurate models of human counterfactual reasoning. 
%
%
%
%
%

%

%

%

%

%

%
%

%
%
%
%
%
%
%
%
%
  
%
%
%

%
%
%
%
%
%
%
%
\acks{The authors thank Dominik Janzing for insightful discussions, and the anonymous reviewers and the area chair for helpful comments and suggestions.
This work
was supported by the T\"ubingen AI Center
and the Deutsche Forschungsgemeinschaft (DFG, German Research Foundation) under Germany's Excellence Strategy – EXC number 2064/1 – Project number 390727645.}

\bibliography{references}

\tableofcontents

\clearpage
\appendix

\section{Toward A Unified Framework of Counterfactual Reasoning}\label{app:unified}
As alluded to in~\cref{sec:future_work}, combining backtracking and interventional counterfactuals into a single unified framework of counterfactual reasoning is an interesting problem, and we believe that our framework is suitable for making progress toward this goal. 
Below, we present a first attempt at doing so. 
Specifically, we show how hard interventions can be modelled through auxiliary variables, a change in the causal laws, and an appropriate choice of backtracking conditional.\footnote{We thank the area chair for suggesting this approach.} %

Let $(\Mcal, P(\Ub))$ be a probabilistic causal model with laws
\begin{equation}
\label{eq:app_laws}
    V_i:=f_i(\PA_i,\Ub_i)\qquad \qquad i=1,\dots, n.
\end{equation}
Suppose that $V_i$ takes values in $\mathcal{V}_i\subseteq \mathbb{R}$ and that we wish to reason about hard interventions of the form do$(V_i=\tilde{v}_i)$ for some constant $\tilde v_i\in\mathcal{V}_i$. 

First, we introduce an auxiliary regime variable $R_i$ taking values in $\mathcal{V}_i \cup \{\text{obs}\}$, where obs is a special place holder symbol that we use to denote the observational regime. 

Next, we replace the $i$\textsuperscript{th} law in~\eqref{eq:app_laws} with the following augmented law:
\begin{equation}
\label{eq:app_laws_ modified}
    V_i:=\tilde f_i(\PA_i,\Ub_i,R_i)=
    \begin{cases}
        f_i(\PA_i,\Ub_i) & \text{if} \,\, R_i=\text{obs}
        \\
        R_i & \text{otherwise}\,,
    \end{cases}
\end{equation}
that is, $V_i$ is determined by its original law in the observational regime ($R_i=\text{obs}$), and equal to $R_i$ outside it. Note that the latter precisely captures hard interventions through $R_i=\tilde v_i$.

Now suppose that we wish to reason about a counterfactual involving $V_i^*=v_i^*$ and that we want to encode the notion that this counterfactual value came about through an intervention. 
With the above construction, this simply amounts to conditioning on the event $R_i^*=v_i^*$, and then backtracking without further modifications as explained in the main text.

More generally, to combine both backtracking and interventional semantics, we can proceed as follows: First, we introduce an auxiliary $R_i$ and modify the causal laws as in~\eqref{eq:app_laws_ modified} for each $i$. Then, we consider the augmented set of exogenous variables $\Ub\cup \Rb$ with $\Rb=\{R_1, ..., R_n\}$, and place a prior on $\Rb$. For example, if we are interested in queries for which the factual world is purely observational, we can choose a $P(\Rb)$ that (independently of $\Ub$) puts all mass on \{obs\} for all $i$ (though other choices are, of course, also possible).
Finally, we specify a backtracking conditional $P_B(\Ub^*,\Rb^*~|~\Ub, \Rb)$ in which we may encode any available background knowledge on which of the causal laws are more likely to be violated through an intervention and which are more likely to remain intact.
We can then evaluate expressions of the form $$P(\Yb^*=\yb,\Zb=\zb~|~\Rb^*_j=\rb^*_j, \Rb_k=\rb_k)$$ where $\Rb_j^*,\Rb_k\subseteq \Rb$ (possibly empty) specify the assumed  observational or interventional regimes for some of the factual and counterfactual variables. 

\looseness-1 Note that a standard interventional counterfactual such as 
$P(Y_x = y ~|~ Z=z)$ can be retrieved as $$P(Y^* = y ~|~ Z=z, R^*_1=\text{obs}, ..., R^*_X=x, ..., R^*_n=\text{obs}, R_1=\text{obs}, \ldots, R_n=\text{obs}),$$ and similarly for more complex conditions. That is, we specify values for all the auxiliary variables
by setting them to obs for all factual variables and all counterfactual variables that are not intervened upon, and setting them to the appropriate intervention values for the other, intervened-upon variables (here only $X$). 
When combined with choosing $P_B(\ub^*~|~\ub)=\delta(\ub)$ so that the values of $\Ub$ are forced to be identical across worlds, we recover interventional counterfactuals as a special case in this unified framework.  

At the same time, we can of course also retrieve backtracking counterfactuals by setting all auxiliary variables, both counterfactual and factual, to obs and choosing whatever $P_B$ one likes. Therefore, our approach offers a unified framework for interpreting counterfactuals in SCMs.

\section{Firing Squad Example: Backtracking vs Interventions}\label{app:firingsquad}
We now illustrate the difference between interventional and backtracking approaches to interpreting and computing counterfactuals for the well-known firing squad example~\citep[][\S~7.1.2]{pearl2009causality}.

\paragraph{Setting.}
Unlike the examples from the main paper, the firing squad is a case of a deterministic causal model with binary variables. 
The setting is as follows: A captain $C$ of two riflemen $A$ and $B$ is waiting for a court order $U$ on whether a prisoner $P$ should be executed. If the court orders the execution ($U=1$), the captain signals ($C=1$), the two riflemen shoot ($A=B=1$), and the prisoner dies ($P=1$). 
More formally, we can express this scenario through the following SCM:
\begin{align}
    C&:=U \label{eq:captain}\\
    A&:=C \label{eq:riflemanA}\\
    B&:=C\\
    P&:=A\lor B
\end{align}
where $U\sim \mathrm{Bernoulli}(\theta)$ without loss of generality.

\begin{figure}[t]
    \centering
    \newcommand{\xshift}{5em}
    \newcommand{\yshift}{3.5em}
    \begin{subfigure}{0.5\textwidth}
    \begin{tikzpicture}
        \centering
        \node (U) [latent] {$U$};
        \node (C) [obs,yshift=-0.5*\yshift, xshift=-\xshift] {$C$};
        \node (A) [obs, yshift=-1.5*\yshift, xshift=-1.5*\xshift] {$A$};
        \node (B) [obs, xshift=-0.5*\xshift, yshift=-1.5*\yshift] {$B$};
        \node (P) [obs,  yshift=-2.5*\yshift,xshift=-1*\xshift]{$P$};
        \node (C*) [latent,yshift=-0.5*\yshift, xshift=\xshift] {$C^*$};
        \node (A*) [det, fill=gray!25, yshift=-1.5*\yshift, xshift=0.5*\xshift] {$A^*$};
        \node (B*) [latent, yshift=-1.5*\yshift, xshift=1.5*\xshift] {$B^*$};
        \node (P*) [latent,  yshift=-2.5*\yshift,xshift=\xshift]{$P^*$};
        \edge{U}{C,C*};
        \edge{C}{A,B};
        \edge{C*}{B*};
        \edge{A,B}{P};
        \edge{A*,B*}{P*};
    \end{tikzpicture}
    \caption{Interventional Interpretation: $P^*$ Dead}
    \label{fig:firingsquad_interventional}
    \end{subfigure}%
    \begin{subfigure}{0.5\textwidth}
    \begin{tikzpicture}
        \centering
        \node (U) [latent, xshift=-0.5*\xshift] {$U$};
        \node (U*) [latent, xshift=0.5*\xshift] {$U^*$};
        \node (C) [obs,yshift=-0.5*\yshift, xshift=-\xshift] {$C$};
        \node (A) [obs, yshift=-1.5*\yshift, xshift=-1.5*\xshift] {$A$};
        \node (B) [obs, xshift=-0.5*\xshift, yshift=-1.5*\yshift] {$B$};
        \node (P) [obs,  yshift=-2.5*\yshift,xshift=-1*\xshift]{$P$};
        \node (C*) [latent,yshift=-0.5*\yshift, xshift=\xshift] {$C^*$};
        \node (A*) [obs, yshift=-1.5*\yshift, xshift=0.5*\xshift] {$A^*$};
        \node (B*) [latent, yshift=-1.5*\yshift, xshift=1.5*\xshift] {$B^*$};
        \node (P*) [latent,  yshift=-2.5*\yshift,xshift=\xshift]{$P^*$};
        \edge{U}{C};
        \edge{U*}{C*};
        \edge{C}{A,B};
        \edge{C*}{A*,B*};
        \edge{A,B}{P};
        \edge{A*,B*}{P*};
        \path[<->,dashed] (U) edge (U*);
    \end{tikzpicture}
    \caption{Backtracking Interpretation: $P^*$ Alive}
    \label{fig:firingsquad_backtracking}
    \end{subfigure}
    \caption{\small 
    \textbf{The Firing Squad Example Highlights the Ambiguity of Counterfactuals.} We observe that the captain signals~($C=1$), the two riflemen shoot~($A=B=1$), and the prisoner dies~($P=1$). We want to answer the counterfactual question: \textit{what would have happened, had rifleman $A$ not shot~($A^*=0$)?}
    (a) The interventional interpretation is that the court still would have ordered the execution ($U=1$) and the captain still would have signalled ($C^*=1$) in the counterfactual world, but rifleman $A$ did not shoot ($A^*=0$) because they disobeyed the order or their rifle got jammed, thus locally violating the causal laws as indicated by the missing edge $C^*\to A^*$. Yet, rifleman $B$ still would have shot ($B^*=1$), and so the prisoner would still be dead ($P^*=0$).
    (b) In stark contrast, the backtracking interpretation is that, since the causal laws must remain untouched, $A^*=0$ could have only happened in the counterfactual world if the captain had not signalled  ($C^*=0$) because the court did not order the execution ($U^*=0$).
    Hence, rifleman $B$ also would not have shot ($B^*=0$) and so the prisoner would still be alive ($P^*=0$).
    }
    \label{fig:my_label}
\end{figure}
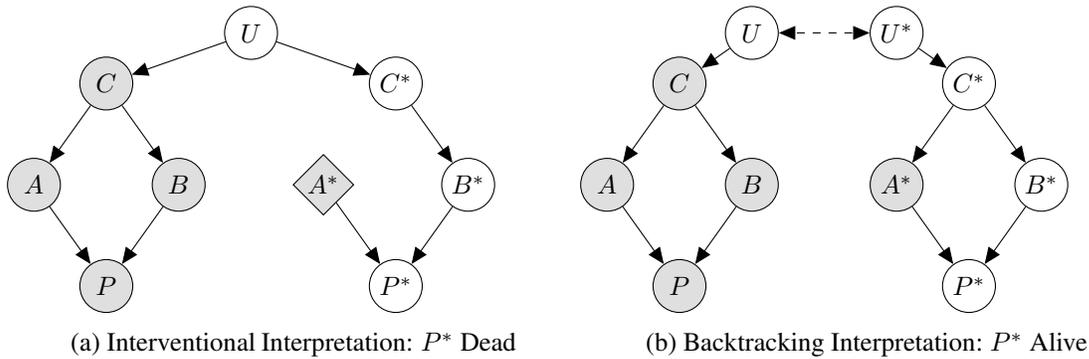

Suppose that we observe that $C=A=B=P=1$, that is, the captain signalled, both riflemen shot, and the prisoner died.
We now wish to answer the following counterfactual question: 
\begin{quoting}
    \em What would have happened, had rifleman $A$ not shot? 
\end{quoting}

\paragraph{Interventional Counterfactual.}
The interventional interpretation of our counterfactual question is illustrated in~\cref{fig:firingsquad_interventional}.
Here, we use the factual observation $C=1$ together with~\eqref{eq:captain} for abduction and conclude that $U=1$.
Since the exogenous variables are assumed shared in the interventionist account, we conclude that the court still would have ordered the execution in the counterfactual world.
We then modify~\eqref{eq:riflemanA} to enforce the counterfactual antecedent $A^*=0$ through an intervention that locally violates the causal laws. 
This is indicated by the missing edge $C^*\to A^*$ in~\cref{fig:firingsquad_interventional}.
A possible interpretation is that rifleman $A$ did not shoot  because they disobeyed the order of the captain or their rifle got jammed.
Finally, we use the resulting modified SCM for prediction to conclude that $C^*=B^*=P^*=1$, meaning that the captain still would have signalled, rifleman $B$ still would have shot, and so the prisoner would still be dead.

\paragraph{Backtracking Counterfactual.}
The backtracking interpretation of our counterfactual question is illustrated in~\cref{fig:firingsquad_backtracking}.
Recall that in backtracking, we introduce new counterfactual exogenous variables $U^*$ which absorb differences across worlds, while the causal laws remain unchanged.  
Here, there are exactly two configurations of the variables that are compatible with the causal laws:
$U=C=A=B=P=1$ and $U=C=A=B=P=0$, that is, all variables are either one or zero.
From observing the counterfact $A^*=0$, we conclude that we must be in the latter case.
In other words, rifleman $A$ not firing could have only happened in the counterfactual world if the captain had not signalled ($C^*=0$) because the court did not order the execution ($U^*=0$). Hence, rifleman $B$ also would not have shot ($B^*=0$) and so the prisoner would still be alive ($P^*=0$).

\paragraph{Take-Away.}
\looseness-1 The point of the previous example is not to single out one interpretation as correct and the other as incorrect. Rather, our goal is to highlight the ambiguous nature of counterfactuals in an intuitive context: depending on the used semantics, the same counterfactual question can be answered in radically different ways (the prisoner is either dead or alive).
As discussed in~\cref{sec:related_work}, depending on the circumstances, background knowledge, and prior beliefs, counterfactuals can be interpreted differently. 
For example, a historian following the ``minimal rewrite rule'' 
might 
use their background knowledge and domain understanding to answer the question interventionally if rifleman $A$ not shooting because of disobeying the command constitutes a smaller perturbation to history than the alternative (backtracking) explanation that they perfectly obey orders and did not shoot because the captain did not signal. In the latter case, a similar decision would then need to be made as to whether the captain disobeyed the court's order (interventional) or whether the court did not order the execution (backtracking).
How a given counterfactual query is interpreted thus often depends on further background knowledge that is not made explicit in the causal model or counterfactual query. 
Interventional counterfactuals can be viewed as relying on an extreme form of such knowledge by always opting for a local violation of the laws (a small miracle), while leaving it implicit that they are, in fact, committing to this extreme.
We offer a formal representation that allows for making this knowledge explicit and for considering different choices, such as full or partial backtracking as alternatives.  
%
%
%

\end{document}